\algrenewcommand\algorithmicrequire{\textbf{Input:}}
\algrenewcommand\algorithmicensure{\textbf{Output:}}
\newtheorem{theorem}{Theorem}
\newtheorem{corollary}[theorem]{Corollary}
\title{GradES: Significantly Faster Training in Transformers with \underline{Grad}ient-Based \underline{E}arly \underline{S}topping}
\author[1,$\dagger$, *]{Qifu Wen}
\author[1,$\dagger$]{Xi Zeng}
\author[1]{Zihan Zhou}
\author[2]{Shuaijun Liu}
\author[3,4]{Mehdi Hosseinzadeh}
\author[2]{Ningxin Su}
\author[1,5]{Reza Rawassizadeh}
\affil[1]{Department of Computer Science, Boston University Metropolitan College}
\affil[2]{Information Hub, The Hong Kong University of Science and Technology, Guangzhou}
\affil[3]{School of Engineering and Technology, Duy Tan University, Da Nang, Vietnam}
\affil[4]{Department of AI, School of Computer Science and Engineering, Galgotias University, Greater Noida, India}
\affil[5]{Center of Excellence in Precision Medicine and Digital Health, Department of Physiology, Chulalongkorn University, Thailand}
\affil[ ]{\textsuperscript{$\dagger$}These authors contributed equally to this work}
\affil[ ]{\textsuperscript{*}Corresponding author: qfwen@bu.edu, rezar@bu.edu, ningxinsu@hkust-gz.edu.cn}
\begin{document}
\maketitle

\begin{abstract}
Early stopping monitors global validation loss and halts all parameter updates simultaneously, which is computationally costly for large transformers due to the extended time required for validation inference. We propose \textit{GradES}, a novel gradient-based early stopping approach that operates within transformer components (attention projections and Feed-Forward layer matrices). We found that different components converge at varying rates during fine-tuning for both language and vision-language models. \textit{GradES} tracks the magnitude of gradient changes in backpropagation for these matrices during training. When a projection matrix's magnitude of gradient changes fall below a convergence threshold $\tau$, we exclude that projection matrix from further updates individually, eliminating costly validation passes while allowing slow converging matrices to continue learning. \textit{GradES} speeds up training time by 1.57--7.22$\times$ while simultaneously enhancing generalization through early prevention of overfitting, resulting in 1.2\% higher average accuracy in language tasks and 3.88\% on multimodal benchmarks.
\end{abstract}

\keywords{Transformer, Vision Transformer, Early Stopping, Fine-tuning, Optimization, Large Language Models, Multimodal Learning}

\section{Introduction}
Large language models (LLMs) have remarkable capabilities across diverse tasks, but their training and deployment require substantial computational costs that scale with model size and inference frequency. Due to the high cost of training models with billions of parameters, any effort toward reducing the training cost improves the development of LLMs. This challenge extends to vision-language models (VLMs), where we can observe similar differences in visual and textual modalities.

Fine-tuning LLMs requires balancing computational efficiency against downstream task performance. As transformer architectures scale to billions of parameters, the computation becomes increasingly expensive \cite{dettmers2023qlora, malladi2023mezo}. While optimizing for memory and computational efficiency remains important \cite{rawassizadeh2025machine}, one key issue is often overlooked, i.e., the common practice of extensive fine-tuning assumes that more gradient updates always improve performance ~\cite{zhang2021bert}. Research has shown that training for excessive epochs leads to overfitting, where models overfit training data and fail to generalize ~\cite{mosbach2021stability}. 

Conventional early stopping, which is determined based on loss score, is computationally expensive for large language models, as each validation step requires full forward passes through all transformer layers for every sample in the validation set. This overhead scales linearly with both model size and validation set size, forcing practitioners to validate infrequently, typically every few thousand training steps \cite{tirumala2022memorization}, thereby creating a fundamental trade-off between computational cost and the risk of overfitting. In vision transformers(ViT), the same problem is present because ViT process both image and text inputs through separate transformer encoders. We detail this overhead in ~\ref{section:results}. 

Furthermore, our experiment shows that Transformer's components have different convergence patterns, as shown in Figure~\ref {fig:within_layer_gradient}. This difference in convergence is particularly pronounced in ViT, where vision transformers typically converge slower than their language counterparts. and the binary decision of classic early stopping fails to exploit the diverse convergence patterns, where attention and MLP components exhibit fundamentally different learning dynamics during the fine-tuning process \cite{yao2025theoretical}.

Through analysis of gradient dynamics across transformer architectures, we identified a critical inefficiency in current fine-tuning practices, i.e., varied convergence across the Transformer's components. By tracking magnitude of gradient changes for attention projections matrix: $\mathbf{W}_q$, $\mathbf{W}_k$, $\mathbf{W}_v$ that compute queries, keys, and values respectively, and $\mathbf{W}_o$ or output projection. As well as MLP network matrix $\mathbf{W}_{\text{up}}$ for dimension expansion and $\mathbf{W}_{\text{down}}$ for dimension reduction. We observe that some components reach convergence with magnitude of gradient changes below $10^{-3}$, while others maintain substantial magnitude of gradient changes throughout, as shown in Figure~\ref {fig:within_layer_gradient}. This pattern holds across both language models and ViT architectures. 

\begin{figure}[htbp]
    \centering
    \includegraphics[width=0.8\textwidth]{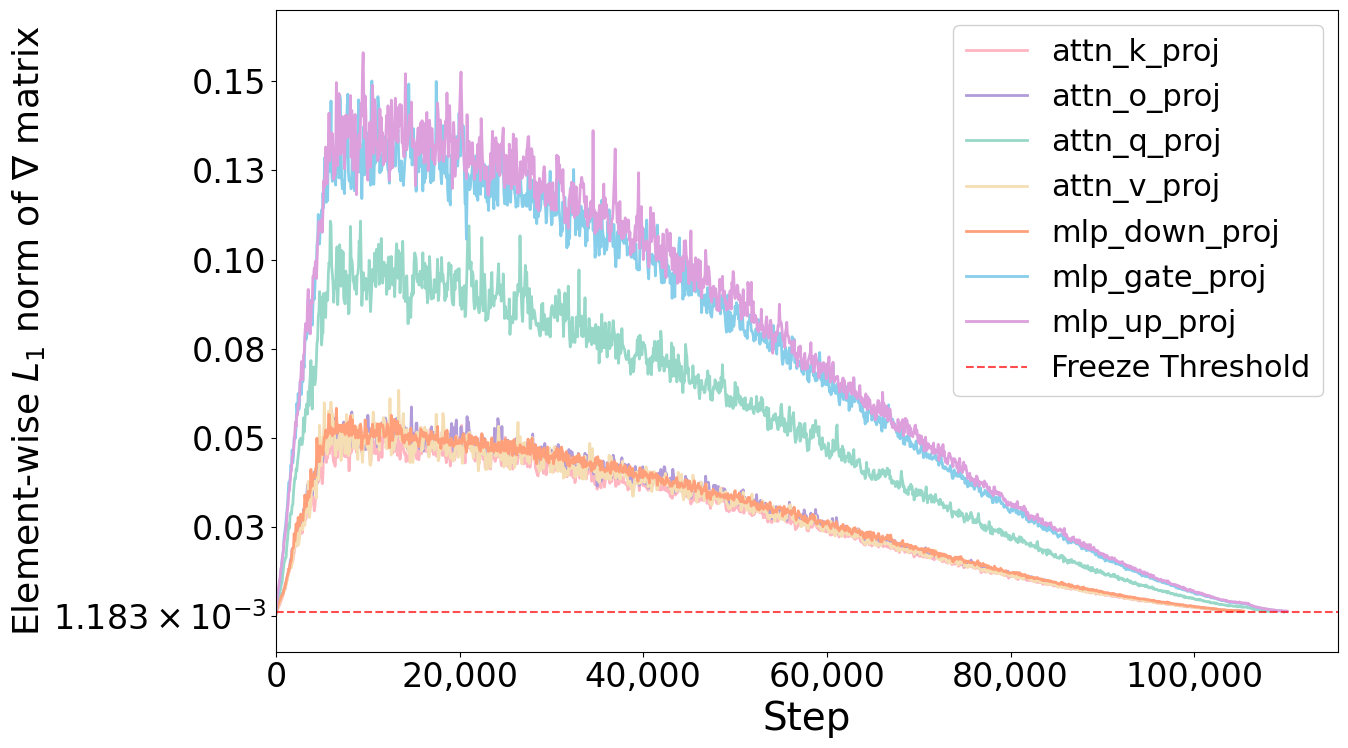}
    \caption{Element-wise $L_1$ norms for the gradient matrix of components in layer 7 for Qwen3-0.6B~\cite{qwen2024qwen25}. Each step consists of processing one training batch through the complete forward pass, loss computation, backpropagation, and parameter update cycle. The seven tracked matrices comprise attention projections ($\mathbf{W}_q^{(7)}$, $\mathbf{W}_k^{(7)}$, $\mathbf{W}_v^{(7)}$, $\mathbf{W}_o^{(7)}$) and MLP components ($\mathbf{W}_{\text{gate}}^{(7)}$, $\mathbf{W}_{\text{up}}^{(7)}$, $\mathbf{W}_{\text{down}}^{(7)}$). MLP projections exhibit 2 to 3$\times$ higher gradient magnitudes than attention projections throughout training, with $\mathbf{W}_{\text{up}}^{(7)}$ and $\mathbf{W}_{\text{down}}^{(7)}$ maintaining the largest magnitude of gradient changes. The red dotted line indicates our convergence threshold $\tau = 1.183 \times 10^{-3}$.} 
    \label{fig:within_layer_gradient}
\end{figure}

This disparity motivates \textit{GradES}, our gradient-based early stopping strategy that operates at the matrix level. Rather than requiring expensive validation passes, \textit{GradES} leverages gradient information already computed during backpropagation to monitor each matrix $\mathbf{W}^{(l)} \in \{\mathbf{W}_q^{(l)}, \mathbf{W}_k^{(l)}, \mathbf{W}_v^{(l)}, \mathbf{W}_o^{(l)}, \mathbf{W}_{\text{gate}}^{(l)}, \mathbf{W}_{\text{up}}^{(l)}, \mathbf{W}_{\text{down}}^{(l)}\}$ independently, where $l$ denotes the layer. As illustrated in Figure~\ref{fig:GradES_architecture}, our architecture monitors the L1 norm changes for each component within the transformer layers. When a matrix's magnitude of gradient changes falls below threshold $\tau$, the component is frozen (marked with lock icons in Figure~\ref{fig:GradES_architecture}), we stop its training while maintaining gradient flow for proper backpropagation, transforming early stopping from a binary termination decision into a continuous regularization mechanism. This selective stopping allows components with higher gradient activity to continue learning while converged components remain fixed. The selection of threshold $\tau$ is a critical hyperparameter whose configuration is detailed in Section~\ref{section:hyperparameter}.

\begin{figure}[htbp] 
\centering 
\includegraphics[width=0.8\textwidth]{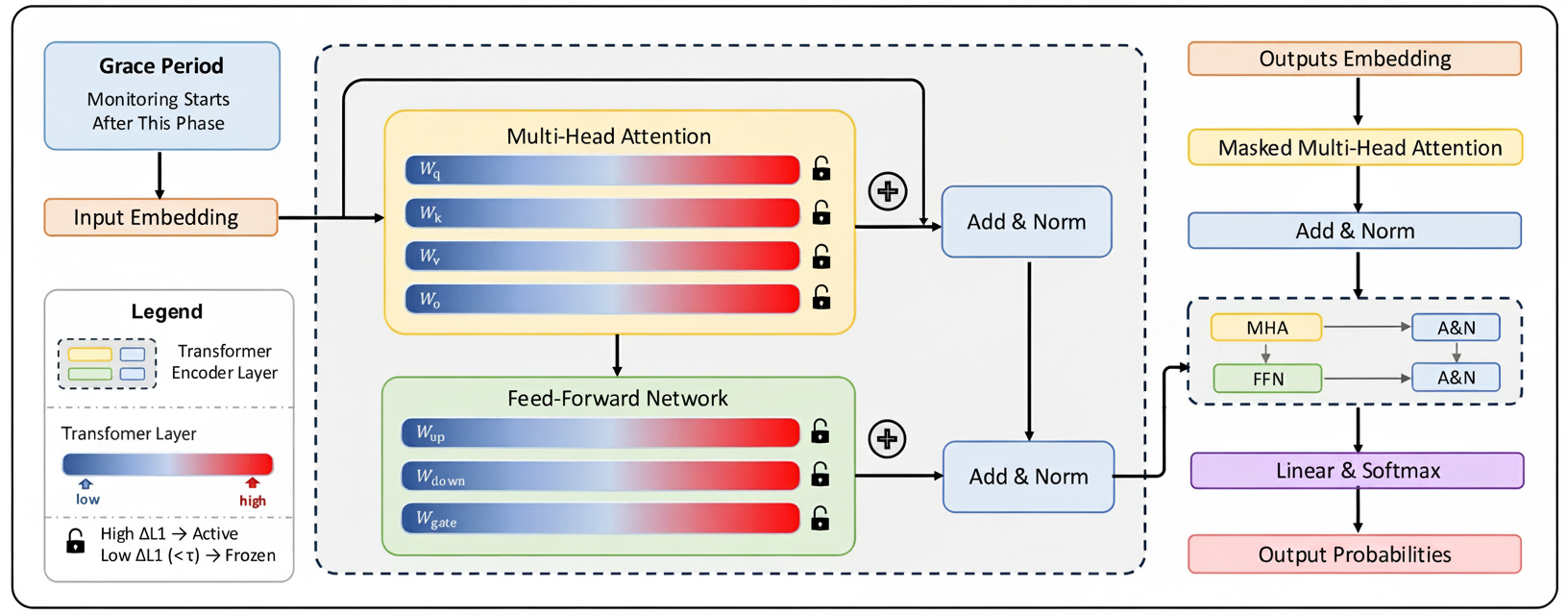} 
\caption{\textit{GradES} architecture. Color indicate gradient magnitude changes for each weight matrix. Components with low gradient changes (below threshold $\tau$) are stopped (lock icons) while others continue training.}
\label{fig:GradES_architecture} 
\end{figure}

Our experiments across five LLMs, varied from 0.6B to 14B parameters, demonstrate that \textit{GradES} reduces fine-tuning time by 50\% while maintaining or improving accuracy across eight benchmarks. Attention projections consistently stabilize 2--3 times faster than MLP components, with key and value projections stopping earliest which is a pattern that validates our component specific (Attention or MLP) approach over early stopping. \textit{GradES} can be seamlessly integrated with optimizers such as Adam \cite{kingma2017adam}, SGD \cite{robbins1951stochastic}, and parameter-efficient fine-tuning methods such as LoRA \cite{hu2022lora}, while complementing weight decay by preventing overfitting in converged components as weight decay continues, regularizing active components.

\textbf{Contributions.} We make the following contributions:

\begin{itemize}
\item We propose \textit{GradES}, a gradient-based early stopping method designed specifically for transformer and vision transformer architectures, eliminating expensive validation inference used for classic early stopping. 

\item We identified that gradient-based early stopping serves as an effective regularization technique, preventing overfitting in fast-converging Transformers' components while maintaining learning capacity in not yet converged completely, achieving improved accuracy up to 1.2\% on language tasks and 3.88\% on multimodal benchmarks and training time speed up of 1.57--7.22$\times$.

\item We validate the compatibility of \textit{GradES} with diverse optimization algorithms (e.g., Adam, SGD) and parameter-efficient fine-tuning methods (e.g., LoRA), showing consistent acceleration across training paradigms while preserving their respective computational and memory efficiency advantages.

\item We release our implementation as an open-source repository and PyPi package. It requires minimal modifications to existing training pipelines.
\end{itemize}

\section{Related Work}
\label{section:related_work}
\subsection{Transformer and VLM}
The Transformer architecture ~\cite{vaswani2017attention} consists of distinct weight matrices for queries, keys, values, output projections, and MLP components. 
Vision-Language Models (VLMs) have evolved from contrastive learning frameworks such as CLIP~\cite{radford2021learning}, which learns joint embeddings for zero-shot visual recognition, to architectures that enable complex multimodal reasoning. LLaVA~\cite{liu2023visual} integrated frozen vision encoders with large language models (LLMs) through learnable projection layers, refined by state-of-the-art models like Qwen2.5-VL-7B~\cite{bai2025qwen25vltechnicalreport}. Concurrently, efforts toward computational efficiency have produced frameworks like nanoVLM~\cite{wiedmann2025nanovlm}.
Interestingly, language encoders converge faster compared to vision layers which require extended optimization. Even within the language layers, attention and FFN layers converge at disparate rates. This variability motivates our \textit{GradES} algorithm.

\subsection{Parameter-Efficient Fine-tuning and Adaptive Training}
LoRA~\cite{hu2022lora} employs low-rank decomposition $\mathbf{B}\mathbf{A}$ where $r \ll \min(d_{\text{in}}, d_{\text{out}})$ (10,000$\times$ parameter reduction). 
A group of promising theoretical works established connections between gradient convergence patterns and optimization dynamics in neural networks. Arora et al. \cite{arora2019convergence} analyzes gradient descent convergence in deep linear networks, while Chatterjee et al. \cite{chatterjee2022convergence} extends these results to general deep architectures, demonstrating that different components exhibit distinct convergence trajectories. Nguegnang et al. \cite{nguegnang2021convergence} characterizes convergence rates for gradient descent in linear networks, providing theoretical foundations for component convergence monitoring. The role of gradient-based stopping criteria has been formalized by Patel et al. \cite{patel2020stopping}, who establish strong convergence guarantees when gradients fall below thresholds. Additionally, Gess et al. \cite{gess2024exponential} demonstrate exponential convergence rates for momentum methods in over parameterized settings, suggesting that convergence patterns are robust across different optimizers.

Building on these theoretical insights, several methods exploit convergence patterns for training efficiency. AutoFreeze~\cite{liu2021autofreeze} accelerates BERT fine-tuning by 2.55$\times$ through automatic layer freezing based on gradient patterns, though it operates only at layer granularity. AFLoRA~\cite{liu2024aflora} introduces gradual freezing of LoRA adapters using convergence scores, achieving 9.5$\times$ parameter reduction and 1.86$\times$ speedup. LoRAF~\cite{zhang2025lora} prevents catastrophic forgetting by selectively freezing LoRA matrices based on their convergence state, using 95\% fewer parameters than standard LoRA. However, these approaches either require coarse layer level decisions (AutoFreeze), additional architectural components like routers or masks (AFLoRA, LoRAF), or coupling with specific fine-tuning methods. 

\textit{GradES} advances this line of work by directly leveraging the heterogeneous convergence phenomenon~\cite{foster2018uniform} at the granularity of individual weight matrices. \textit{GradES} uses gradient magnitudes to detect when components converge~\cite{gao2024gradient}. It then freezes individual components as they reach convergence, adapting to each component's learning rate without requiring schedules.

\subsection{Early Stopping}
Early stopping represents an established regularization technique with extensive theoretical and practical foundations. Prechelt et al. ~\cite{prechelt1998early} provided practical guidelines for validation-based stopping criteria, while theoretical analyses by Yao et al. \cite{yao2007early} demonstrated that early stopping acts as implicit regularization in gradient descent learning. Recent theoretical advances by Ji et al. \cite{ji2021earlystopped} have proven the consistency of early stopping in neural networks, establishing rigorous convergence guarantees. Beyond traditional validation-based approaches, gradient-based early stopping by Mahsereci et al.~\cite{mahsereci2017early} and Pflug et al.~\cite{pflug1990online} monitors gradient magnitudes to detect convergence without requiring validation data. Recent work has explored more sophisticated stopping strategies: instance dependent early stopping by Yuan et al.~\cite{instance2025early} adapts termination per training example to reduce overfitting on easy samples, while correlation based methods by Ferro et al.~\cite{correlating2024early} combine multiple online indicators for more robust stopping decisions. Methods for noisy settings have also emerged, including strategies for learning with label noise by Bai et al.~\cite{bai2021understanding} and stopping without validation data by Yuan et al.~\cite{yuan2025earlystoppinglabelnoise}.

All these approaches apply stopping decisions globally, either terminating all training simultaneously or operating at the instance level. \textit{GradES} differs fundamentally by applying gradient-based convergence detection at the individual weight matrix level within transformers, allowing different components to stop at their natural convergence points while others continue learning. This approach will effectively bridge early stopping with fine-grained regularization.

\section{Method}
\label{section:method}
\subsection{GradES Algorithm}
We analyzed the magnitude of gradient changes of weight matrices across Transformer layers during fine-tuning and discovered different convergence rates for different components within Transformers' Layers. For each weight matrix $\mathbf{W}^{(l)}$ in layer $l \in \{1, \ldots, L\}$, we track the element-wise $L_1$ norm of gradients at training step $t$, as it is presented in Equation ~\ref{equation:gradient_norm_definition}.
\begin{equation}
\label{equation:gradient_norm_definition}
G_{\mathbf{W}}^{(l)}(t) = \|\nabla \mathbf{W}^{(l)}_t - \nabla \mathbf{W}^{(l)}_{t-1}\|_1 = \sum_{i=1}^{m} \sum_{j=1}^{n} |(\nabla \mathbf{W}^{(l)}_t)_{ij} - (\nabla \mathbf{W}^{(l)}_{t-1})_{ij}|
\end{equation}
where $\mathbf{W}^{(l)} \in \{\mathbf{W}_q^{(l)}, \mathbf{W}_k^{(l)}, \mathbf{W}_v^{(l)}, \mathbf{W}_o^{(l)}, \mathbf{W}_{\text{gate}}^{(l)}, \mathbf{W}_{\text{up}}^{(l)}, \mathbf{W}_{\text{down}}^{(l)}\}$ represents the attention projection matrices ($\mathbf{W}_q$, $\mathbf{W}_k$, $\mathbf{W}_v$, $\mathbf{W}_o$) and MLP weight matrices ($\mathbf{W}_{\text{gate}}$, $\mathbf{W}_{\text{up}}$, $\mathbf{W}_{\text{down}}$) within each Transformer layer. The metric $G_{\mathbf{W}}^{(l)}(t)$ quantifies the total gradient flow through each component, revealing distinct convergence trajectories that motivate our gradient-based component level early stopping strategy.
\begin{algorithm}[htbp]
\caption{\textit{GradES}: Early Stopping based on Absolute Change of Gradient Matrix }
\label{alg:GradES}
\begin{algorithmic}[1]
\Require Pre-trained model $\mathcal{M}$ with $L$ layers, dataset $\mathcal{D}$, total steps $T$, grace period ratio $\alpha$, threshold $\tau$, learning rate $\eta$
\Ensure Fine-tuned model $\mathcal{M}'$
\State \textbf{Initialize:} All parameters trainable, frozen set $\mathcal{F} \leftarrow \emptyset$
\State \textbf{Initialize:} Previous gradients $\nabla \mathbf{W}_{t-1} \leftarrow 0$ for all $\mathbf{W}$
\State $t_{\text{grace period}} \leftarrow \lceil \alpha \cdot T \rceil$ \Comment{Start monitoring after grace period}
\State The grace period is computed as a fraction $\alpha$ of the total training steps $T$.
\For{training step $t = 1$ to $T$}
    \State Sample $\mathcal{B} \sim \mathcal{D}$; compute loss $\mathcal{L}(\mathcal{B})$ and gradients
    
    \If{$t > t_{\text{grace period}}$} \Comment{Monitor after grace period}
        \For{each $\mathbf{W} \in$ all layers : $\mathbf{W} \notin \mathcal{F}$}
            \State $G_{\mathbf{W}}(t) = \sum_{i,j} |(\nabla \mathbf{W}_t)_{ij} - (\nabla \mathbf{W}_{t-1})_{ij}|$
            \State \textbf{if} $G_{\mathbf{W}}(t) < \tau$ \textbf{then} $\mathcal{F} \leftarrow \mathcal{F} \cup \{\mathbf{W}\}$
        \EndFor
    \EndIf
    \For{each projection matrix $\mathbf{W}$}
        \If{$\mathbf{W} \notin \mathcal{F}$} \Comment{Update only active parameters}
            \State $\mathbf{W} \leftarrow \mathbf{W} - \eta \cdot \nabla \mathbf{W}$
        \Else
            \State Skip update (but gradient still flows through)
        \EndIf
    \EndFor
    
    \For{each $\mathbf{W} \in$ all layers} \Comment{Store gradients for next step}
        \State $\nabla \mathbf{W}_{t-1} \leftarrow \nabla \mathbf{W}_t$
    \EndFor

    \If{all parameters frozen} \Comment{i.e., all $\mathbf{W} \in \mathcal{F}$}
        \State \textbf{break} \Comment{Early stopping}
    \EndIf
\EndFor
\State $\mathcal{M}' \leftarrow$ Update model with modified projection matrices $\mathbf{W}$
\State \Return Fine-tuned model $\mathcal{M}'$
\end{algorithmic}
\end{algorithm}

Algorithm~\ref{alg:GradES} formalizes, \textit{GardES}, our gradient-based early stopping procedure. \textit{GardES} algorithm introduces three key innovations that distinguish it from traditional early stopping approaches:

\textbf{Component-level convergence detection.} Unlike conventional methods that monitor global validation accuracy, \textit{GradES} tracks individual weight matrices $\mathbf{W}^{(l)} \in \{\mathbf{W}_q^{(l)}, \mathbf{W}_k^{(l)}, \mathbf{W}_v^{(l)}, \mathbf{W}_o^{(l)}, \mathbf{W}_{\text{gate}}^{(l)}, \mathbf{W}_{\text{up}}^{(l)}, \mathbf{W}_{\text{down}}^{(l)}\}$ within each layer $l$. We employ the $L_1$ gradient norm $G_{\mathbf{W}}(t) = \|\nabla \mathbf{W}\|_1$ as our convergence metric, chosen for its computational efficiency compared to $L_2$ norms. When $G_{\mathbf{W}}(t) < \tau$, we consider the component converged stop updating the weight matrix (lines 8-14).

\textbf{Adaptive grace period strategy.} The initial $t_{\text{grace period}} = \lceil \alpha T \rceil$ steps (with $\alpha = 0.5$ in our experiments) allow all components to escape their pre-trained initialization before convergence monitoring begins. This prevents terminating training of components prematurely that may appear initially converged but require substantial adaptation for the downstream task. The grace period duration depends on the total training examples and scales proportionally with the total training budget $T$.

\textbf{Gradient flow preservation.} A critical design choice is maintaining gradient computation through Converged matrices (line 12). While converged components do not receive parameter updates (lines 17-22), they continue to propagate gradients to earlier layers. This ensures that active components receive proper gradient signals throughout training, preventing the gradient flow disruption that would occur with complete component removal, like pruning. 

The algorithm terminates when all components satisfy the convergence criterion (Line 24), eliminating unnecessary computation on converged parameters. We provide formal convergence guarantees and theoretical analysis in Appendix~\ref{section:convergence}.

\subsection{\textit{GradES} for Low-Rank Adaptation}
LoRA is one of the most common approaches used for fine-tuning.  Since LoRA constrains parameters to a low-dimensional subspace, gradient dynamics in this reduced space exhibit fundamentally different convergence properties than full fine-tuning. When applying \textit{GradES} to LoRA ~\cite{hu2022lora} fine-tuning, we monitor gradient magnitudes in the low-rank space rather than the full parameter space. Let $l \in \{1, \ldots, L\}$ denote the layer index where $L$ is the total number of layers. Within each transformer layer $l$, we apply LoRA decomposition to individual weight matrices $\mathbf{W}^{(l)} \in \{\mathbf{W}_q^{(l)}, \mathbf{W}_k^{(l)}, \mathbf{W}_v^{(l)}, \mathbf{W}_o^{(l)}, \mathbf{W}_{\text{gate}}^{(l)}, \mathbf{W}_{\text{up}}^{(l)}, \mathbf{W}_{\text{down}}^{(l)}\}$, where the first four correspond to attention projections and the latter three to MLP components.

For each weight matrix $\mathbf{W}^{(l)} \in \mathbb{R}^{d_{\text{out}} \times d_{\text{in}}}$, where $d_{\text{out}}, d_{\text{in}}$ are the output and input dimensions, in layer $l$, the LoRA weight is:
\begin{equation}
\mathbf{W}_{\text{adapted}}^{(l)} = \mathbf{W}_{\text{frozen}}^{(l)} + \mathbf{B}_{\mathbf{W}}^{(l)}\mathbf{A}_{\mathbf{W}}^{(l)}
\end{equation}
where $\mathbf{B}_{\mathbf{W}}^{(l)} \in \mathbb{R}^{d_{\text{out}} \times r}$ and $\mathbf{A}_{\mathbf{W}}^{(l)} \in \mathbb{R}^{r \times d_{\text{in}}}$ are the trainable low-rank matrices with rank $r \ll \min(d_{\text{out}}, d_{\text{in}})$. 

For each individual LoRA matrix $\mathbf{W}$ in layer $l$, we track convergence by monitoring the combined gradient magnitude:
\begin{equation}
G_{\mathbf{W}}^{(l)}(t) = \|\nabla\mathbf{A}_{\mathbf{W}}^{(l)}\|_1 + \|\nabla\mathbf{B}_{\mathbf{W}}^{(l)}\|_1
\end{equation}
where $\nabla\mathbf{A}_{\mathbf{W}}^{(l)}$ and $\nabla\mathbf{B}_{\mathbf{W}}^{(l)}$ denote the gradients of the low-rank matrices at training step $t$, and $\|\cdot\|_1$ denotes the element-wise $L_1$ norm.

The freezing operates at the matrix level, enabling precise control. After the grace period period $t > t_{\text{grace period}}$, we independently freeze each LoRA matrix when:
\begin{equation}
G_{\mathbf{W}}^{(l)}(t) < \tau_r \quad \text{for } \mathbf{W} \in \{\mathbf{W}_q, \mathbf{W}_k, \mathbf{W}_v, \mathbf{W}_o, \mathbf{W}_{\text{gate}}, \mathbf{W}_{\text{up}}, \mathbf{W}_{\text{down}}\}
\end{equation}
where $\tau_r$ is the convergence threshold adjusted for the reduced parameter count. Once a specific matrix reaches convergence, we stop updating its corresponding $\mathbf{A}_{\mathbf{W}}^{(l)}$ and $\mathbf{B}_{\mathbf{W}}^{(l)}$ while continuing to compute gradients through them for backpropagation. Training terminates when all LoRA matrices across all layers are frozen.

\section{Experimental Setup}
\label{section:experiments}

\subsection{Models}
We evaluate \textit{GradES} on the 5 most popular language models and 2 vision language model on huggingface~\cite{huggingface_hub} (at the time of conducting this experiment) to represent different parameter scales, quantization strategies, and architectural designs. Our language model selection spans three orders of magnitude in parameter count (0.6B to 14B) to investigate the scalability of our approach.  We conducted all experiments using 4-bit quantized models, due to hardware limitations of the experimental platform. Specifically, we employ: \textbf{Qwen3-14B}~\cite{qwen2024qwen25}; \textbf{Microsoft Phi4-14B}~\cite{phi42024technicalreport}; \textbf{Llama-3.1-8B-Instruct}~\cite{dubey2024llama3}; \textbf{Mistral-7B-Instruct-v0.3}~\cite{jiang2023mistral}; and the compact \textbf{Qwen3-0.6B}~\cite{qwen2024qwen25}. To assess performance on multimodal tasks, we also evaluate two vision-language models: \textbf{Qwen2.5-VL-7B}~\cite{bai2025qwen25vltechnicalreport} and the smaller \textbf{nanoVLM}~\cite{wiedmann2025nanovlm}.

\subsection{Benchmark and Evaluation Metrics}
We evaluate \textit{GradES} on eight widely used benchmarks covering different aspects of language understanding. For reasoning tasks, we use BoolQ~\cite{clark2019boolq} for boolean question answering requiring complex reasoning, PIQA~\cite{bisk2020piqa} for physical commonsense reasoning about everyday situations, SIQA~\cite{sap2019socialiqa} for social commonsense reasoning about human interactions, and HellaSwag~\cite{zellers2019hellaswag} for commonsense inference about plausible continuations. For knowledge-intensive task completion, we employ OpenBookQA~\cite{mihaylov2018can} for science questions requiring multi-hop reasoning, ARC-Easy and ARC-Challenge~\cite{clark2018think} for grade school science questions at two difficulty levels, and WinoGrande~\cite{sakaguchi2021winogrande} for pronoun resolution requiring commonsense knowledge.
We measure both task accuracy and computational efficiency to provide a rounded view of \textit{GradES}'s benefits. Average task accuracy is measured by accuracy on each benchmark's test set. For computational efficiency metrics, we track training time on identical hardware and floating point operations (FLOPs) computed using PyTorch profiler~\cite{paszke2019pytorch}.

We evaluate vision transformers using the LMMs-Eval harness~\cite{zhang2024lmmsevalrealitycheckevaluation, lmms_eval2024} to assess nanoVLM training accuracy. We conduct fine-tuning experiments on Qwen2.5-VL-7B and evaluate performance across three established benchmarks: GQA~\cite{hudson2019gqanew} for compositional visual reasoning, VQAv2~\cite{goyal2017making} for robust visual question answering, and COCO Captions~\cite{chen2015cococaptionsdata} for image captioning capabilities.

\subsection{PEFT and Early Stopping Methods}
We evaluate \textit{GradES} against established fine-tuning paradigms to demonstrate its benefits. Our baseline methods comprise Full Parameter Fine-tuning (FP), which updates all model parameters without constraints; and LoRA~\cite{hu2022lora} to assess the composability of our method. In particular, we apply \textit{GradES} to both full fine-tuning (FP+GradES) and LoRA (LoRA+GradES), yielding six distinct configurations for comprehensive evaluation. For validation-based early stopping baselines (FP+ES and LoRA+ES), we perform validation checks at 5\% intervals throughout training. Terminating training when validation loss fails to improve for consecutive checkpoints. We presents the overhead of early stopping compared to \textit{GradES} in ~\ref{tab:timing-flops-comparison}. More detailed experiment settings and hyperparameter selection are provided in Appendix~\ref{section:code_availability}.

\section{Results}
\label{section:results}
\subsection{Accuracy on Benchmarks}
We evaluate \textit{GradES} against standard full-parameter (FP) fine-tuning and LoRA across five language models ranging from 0.6B to 14B parameters on eight commonsense reasoning benchmarks. As shown in Table~\ref{tab:LLM_accuracy_results}, \textit{GradES} consistently improves upon the baseline early stopping (ES) method across both fine-tuning methods (FP and LoRA). For full-parameter fine-tuning on larger models (14B), full-parameter with \textit{GradES} achieves the highest average accuracy on Qwen3 (90.81\%) and Phi4 (91.94\%), demonstrating consistent improvements over full-parameter fine-tuning (90.80\% and 91.93\% respectively). Its impact is more significant on smaller models, where LoRA (ES) and LoRA (GradES) on Qwen3 0.6B achieve 67.37\% and 67.30\% average accuracy, substantially outperforming standard full-parameter methods ($\sim$66.5\%). Notably, the choice of base fine-tuning method (Full-parameter vs. LoRA) exhibits its model-independent behavior, while full-parameter fine-tuning methods perform competitively on larger models (Qwen3 14B, Phi4 14B), LoRA variants showed higher accuracy on mid-sized models, with LoRA (ES) achieving 86.27\% on Mistral-7B compared to 75.80\% for standard full-parameter fine-tuning, a remarkable 10.47 percentage point improvement. \textit{GradES} shows particular strength on specific benchmarks, achieving best results on Winograde compared to other methods (Qwen3 14B: 84.77\%), PIQA (Phi4 14B: 92.60\%). 
\begin{table}[H]
\centering
\small
\caption{Comparison of the accuracy for different fine-tuning methods on five different language models. Values are reported in percentages, and the best one in each category is highlighted in bold.}
\label{tab:LLM_accuracy_results}
\resizebox{\textwidth}{!}{
\begin{tabular}{@{}llccccccccc@{}}
\toprule
\textbf{Model} & \textbf{Method} & \textbf{BoolQ} & \textbf{PIQA} & \textbf{SIQA} & \textbf{HellaSwag} & \textbf{Winograde} & \textbf{OpenBookQA} & \textbf{ARC-C} & \textbf{ARC-E} & \textbf{Avg.} \\
\midrule
\multirow{6}{*}{Qwen3 14B} 
    & Full Parameter       & 91.07 & 91.29 & 81.93 & 95.11 & 83.03 & 91.60 & 94.31 & 98.07 & 90.80 \\
    & FP+ES  & 91.07 & 91.08 & 81.99 & 95.05 & 83.03 & 91.40 & 93.98 & 98.07 & 90.71 \\
    & FP+GradES& \textbf{91.22} & 91.19 & 82.04 & 94.97 & 83.03 & \textbf{91.80} & \textbf{94.31} & 97.89 & \textbf{90.81} \\
    & LoRA                 & 90.86 & \textbf{91.24} & 81.68 & 95.21 & 83.35 & 91.40 & 94.31 & 97.19 & 90.65 \\
    & LoRA+ES           & 90.64 & 91.08 & 81.99 & \textbf{95.40} & 81.53 & 91.40 & 93.65 & 97.54 & 90.40 \\
    & LoRA+GradES         & 90.67 & 91.13 & \textbf{82.29} & 95.22 & \textbf{84.77} & 91.20 & 93.31 & 97.02 & 90.70 \\
\midrule
\multirow{6}{*}{Phi4 14B} 
    & Full Parameter    & 90.49 & 91.95 & 83.27 & 95.49 & 88.71 & 93.00 & 94.31 & 98.25 & 91.93 \\
    & FP+ES  & 90.34 & 92.11 & 82.55 & 95.50 & 88.56 & 93.00 & 94.31 & 98.07 & 91.80\\
    & FP+GradES& 90.31 & \textbf{92.60} & \textbf{83.06} & 95.43 & \textbf{88.95} & 92.60 & 94.31 & 98.25 & \textbf{91.94} \\
    & LoRA                 & 90.31 & 92.00 & 82.45 & 95.36 & 87.53 & 91.80 & \textbf{94.65} & 98.07 & 91.52 \\
    & LoRA+ES           & 90.61 & 92.22 & 82.60 & \textbf{95.44} & 87.37 & 92.00 & 94.31 & \textbf{98.25} & 91.60 \\
    & LoRA+GradES         & \textbf{90.49} & 92.60 & 82.40 & 95.38 & 87.37 & 91.60 & 94.65 & 98.07 & 91.57 \\
\midrule
\multirow{6}{*}{Qwen3 0.6B} 
    & Full Parameter     & 77.28 & 69.31 & 66.99 & 65.09 & 50.28 & 61.20 & 61.54 & 80.53 & 66.53\\
    & FP+ES & 77.06 & 68.99 & 67.09 & 65.16 & 49.57 & 61.00 & 63.21 & 81.23 & 66.66\\
    & FP+GradES& 77.03 & \textbf{71.38} & 66.84 & 64.26 & \textbf{51.62} & 62.40 & 61.20 & 79.65 & 66.80 \\
    & LoRA                 & \textbf{79.14} & 69.15 & \textbf{69.14} & 67.94 & 49.09 & \textbf{66.20} & 60.54 & 77.19 & 67.30 \\
    & LoRA+ES           & 79.11 & 69.10 & 68.83 & \textbf{68.18} & 49.64 & 65.00 & 61.20 & 77.89 & \textbf{67.37} \\
    & LoRA+GradES         & 78.56 & 69.42 & 68.17 & 68.20 & 49.41 & 65.40 & \textbf{61.54} & 77.72 & 67.30 \\
\midrule
\multirow{6}{*}{Llama-3.1-8B} 
    & Full Parameter    & 89.27 & 88.08 & 81.01 & 94.40 & 81.93 & 85.00 & 83.61 & 92.46 & 86.97\\
    & FP+ES  & 89.24 & 87.81 & 80.86 & 94.42 & 82.56 & 85.80 & 82.61 & 92.11 & 86.93\\
    & FP+GradES& 88.87 & \textbf{88.25} & 80.45 & 94.23 & 82.79 & 84.80 & \textbf{83.95} & \textbf{92.81} & \textbf{87.02} \\
    & LoRA                 & 87.98 & 87.65 & 79.73 & 94.31 & 80.35 & \textbf{87.20} & 83.28 & 91.40 & 86.49 \\
    & LoRA+ES           & 88.62 & 88.19 & 79.32 & 94.20 & 80.35 & 87.20 & 83.28 & 91.75 & 86.62\\
    & LoRA+GradES         & \textbf{88.78} & 88.03 & \textbf{79.68} & \textbf{94.43} & \textbf{81.69} & 87.00 & 83.95 & 90.53 & 86.76 \\
\midrule
\multirow{6}{*}{Mistral-7B} 
    & Full Parameter& 85.26 & 80.25 & 77.33 & 83.71 & 66.30 & 75.60 & 62.54 & 75.44 & 75.80 \\
    & FP (ES)  & 85.32 & 80.09 & 76.20 & 83.95 & 57.54 & 74.60 & 64.21 & 74.39 & 74.54\\
    & FP+GradES & 85.38 & 78.56 & 75.79 & 84.14 & 66.47 & 76.40 & 61.20 & 75.10 & 75.38 \\
    & LoRA                 & \textbf{89.33} & 87.43 & 79.79 & \textbf{94.95} & 79.72 & 84.20 & 76.25 & 88.42 & 85.01 \\
    & LoRA+ES           & 88.93 & \textbf{88.30} & \textbf{81.01} & 94.69 & \textbf{82.24} & \textbf{85.00} & 79.60 & 90.35 & \textbf{86.27} \\
    & LoRA+GradES         & 89.36 & 88.03 & 80.71 & 94.69 & 80.90 & 84.40 & \textbf{81.61} & \textbf{89.65} & 86.17 \\
\bottomrule
\end{tabular}
}
\end{table}

Table~\ref{tab:VLM_finetuning_accuracy_results} presents \textit{GradES} performance on vision-language tasks using Qwen2.5-VL-7B. \textit{GradES} consistently improves both full-parameter and LoRA fine-tuning, with LoRA+GradES achieving the highest average accuracy (70.6\%). LoRA methods substantially outperform full-parameter approaches on image captioning (54.44\% vs. 41.61\% for COCO Captions), while LoRA+GradES attains best performance on VQAv2 (81.24\%). These results demonstrate that \textit{GradES} effectively enhances vision-language model performance, particularly when combined with PEFT methods.

\begin{table}[H]
\centering
\small
\caption{Performance comparison of \textit{GradES} against standard fine-tuning methods on vision-language benchmarks. Results show accuracy (\%) for Qwen2.5-VL-7B across visual reasoning (GQA), question answering (VQAv2), and image captioning (COCO Cap) tasks.}
\label{tab:VLM_finetuning_accuracy_results}
\begin{tabular}{@{}llcccc@{}}
\toprule
\textbf{Model} & \textbf{Method} & \textbf{GQA} & \textbf{VQAv2} & \textbf{COCO Cap} & \textbf{Avg.} \\
\midrule
\multirow{4}{*}{Qwen2.5-VL-7B}
& Full Parameter & 75.69 & 81.0 & 41.38 & 66.08 \\
& FP+GradES & 76.08 & 80.81 & 41.61 & 66.20 \\
& LoRA & 76.49 & 81.01 & 53.22 & 70.24 \\
& LoRA+GradES & 76.13 & 81.24 & 54.44 & 70.6 \\
\bottomrule
\end{tabular}%
\end{table}

\begin{table}[H]
\centering
\caption{Performance of \textit{GradES} on nanoVLM training across multimodal reasoning benchmarks. Results show accuracy (\%) for full-parameter fine-tuning with and without \textit{GradES} across perception, reasoning, and knowledge-based tasks.}
\label{tab:VLM_training_accuracy_results}
\begin{tabular}{@{}lcc@{}}
\toprule
\textbf{Benchmark} & \textbf{Training} & \textbf{Training+GradES} \\
\midrule
Coarse Perception & 38.87 & 42.42 \\
Fine-grained Perception & 22.40 & 29.82 \\
Instance Reasoning & 36.07 & 36.42 \\
Logical Reasoning & 28.86 & 37.21 \\
Math & 27.60 & 28.60 \\
Science \& Technology & 31.10 & 33.74 \\
\midrule
\textbf{Avg.} & \textbf{30.82} & \textbf{34.70} \\
\bottomrule
\end{tabular}%
\end{table}

Table~\ref{tab:VLM_training_accuracy_results} demonstrates \textit{GradES}'s effectiveness on nanoVLM training across diverse multimodal benchmarks. Training with \textit{GradES} achieves substantial improvements over standard training, with an average accuracy increase of 3.88 percentage points (34.70\% vs. 30.82\%). Most notably, \textit{GradES} yields significant gains on Fine-grained Perception (+7.42\% and Logical Reasoning (+8.35\%), while maintaining consistent improvements across all evaluated domains. These results confirm \textit{GradES}'s ability to enhance vision-language model training, particularly for tasks requiring detailed visual understanding and complex reasoning.

\subsection{Training Efficiency}
Tables~\ref{tab:timing-flops-comparison} and~\ref{tab:VLM-timing-flops-comparison} present computational efficiency metrics across language and vision-language models. \textit{GradES} demonstrates consistent efficiency improvements over both baseline and traditional early stopping (ES) approaches. For language models, FP+GradES achieves speedups of 1.32--1.64$\times$ while reducing FLOPs by 29--45\%, with the most significant gains on larger models (Qwen3-14B: 1.51$\times$ speedup, 0.55$\times$ FLOPs). In contrast, traditional ES paradoxically slows training (0.59--0.76$\times$) despite FLOPs reduction, highlighting the overhead of frequent validation and convergence monitoring. Vision-language models show similar trends, with FP+GradES achieving 1.17$\times$ speedup and 12\% FLOPs reduction on Qwen2.5-VL-7B.

The efficiency benefits extend to parameter-efficient methods, where LoRA+GradES consistently outperforms standard LoRA across all models. For language models, LoRA+GradES achieves speedups of 2.66--2.87$\times$ over full-parameter baselines while reducing LoRA's computational overhead from 2.34--2.43$\times$ to 1.88--2.29$\times$ FLOPs. On vision-language tasks, LoRA+GradES delivers the highest speedup (1.80$\times$), reducing training time by 44\%. Remarkably, these computational savings preserve model quality—\textit{GradES} variants achieve the highest accuracies while requiring substantially fewer FLOPs than standard fine-tuning. These results establish gradient-guided early stopping as a practical solution for resource-constrained deployment, demonstrating consistent efficiency gains across model architectures (Qwen, Phi, Llama, Mistral), scales (0.6B--14B parameters), and modalities (language and vision-language).

\begin{table}[H]
\centering
\caption{Training time and computational cost comparison of different fine-tuning methods on 5 different language models. Training time in seconds, FLOPs in floating point operations. Speedup and FLOPs ratios are computed relative to Full Parameter(Base) for all methods. The best one in each category is highlighted in bold.}
\label{tab:timing-flops-comparison}
\begin{tabular}{@{}llcccc@{}}
\toprule
\textbf{Model} & \textbf{Method} & \textbf{Training Time (s)} & \textbf{Speedup} & \textbf{FLOPs} & \textbf{FLOPs Ratio} \\
\midrule
\multirow{6}{*}{Qwen3-14B}
    & Full Parameter(Base) & 16,202 & 1.00$\times$ & $1.17 \times 10^{18}$ & 1.00$\times$ \\
    & FP+ES                & 22,466 & 0.72$\times$ & $8.76 \times 10^{17}$ & 0.75$\times$ \\
    & FP+GradES            & 10,721 & 1.51$\times$ & $6.43 \times 10^{17}$ & \textbf{0.55$\times$} \\
    & LoRA                 & 6,387  & 2.54$\times$ & $2.74 \times 10^{18}$ & 2.34$\times$ \\
    & LoRA+ES              & 23,932 & 0.68$\times$ & $2.74 \times 10^{18}$ & 2.34$\times$ \\
    & LoRA+GradES          & 5,643  & \textbf{2.87$\times$} & $2.43 \times 10^{18}$ & 2.08$\times$ \\
\midrule
\multirow{6}{*}{Phi4-14B}
    & Full Parameter(Base) & 14,627 & 1.00$\times$ & $1.12 \times 10^{18}$ & 1.00$\times$ \\
    & FP+ES                & 23,040 & 0.63$\times$ & $9.49 \times 10^{17}$ & 0.85$\times$ \\
    & FP+GradES            & 9,218  & 1.59$\times$ & $6.15 \times 10^{17}$ & \textbf{0.55$\times$} \\
    & LoRA                 & 6,030  & 2.43$\times$ & $2.69 \times 10^{18}$ & 2.40$\times$ \\
    & LoRA+ES              & 24,394 & 0.60$\times$ & $2.69 \times 10^{18}$ & 2.40$\times$ \\
    & LoRA+GradES          & 5,506  & \textbf{2.66$\times$} & $2.38 \times 10^{18}$ & 2.13$\times$ \\
\midrule
\multirow{6}{*}{Qwen3-0.6B}
    & Full Parameter(Base) & 6,550  & 1.00$\times$ & $3.68 \times 10^{16}$ & 1.00$\times$ \\
    & FP+ES                & 8,569  & 0.76$\times$ & $2.76 \times 10^{16}$ & 0.75$\times$ \\
    & FP+GradES            & 4,018  & 1.63$\times$ & $2.03 \times 10^{16}$ & \textbf{0.55$\times$} \\
    & LoRA                 & 892    & \textbf{7.34$\times$} & $8.94 \times 10^{16}$ & 2.43$\times$ \\
    & LoRA+ES              & 6,155  & 1.06$\times$ & $8.94 \times 10^{16}$ & 2.43$\times$ \\
    & LoRA+GradES          & 907    & 7.22$\times$ & $8.43 \times 10^{16}$ & 2.29$\times$ \\
\midrule
\multirow{6}{*}{Llama-3.1-8B}
    & Full Parameter(Base) & 9,541  & 1.00$\times$ & $7.45 \times 10^{17}$ & 1.00$\times$ \\
    & FP+ES                & 16,129 & 0.59$\times$ & $6.70 \times 10^{17}$ & 0.90$\times$ \\
    & FP+GradES            & 5,832  & 1.64$\times$ & $4.10 \times 10^{17}$ & \textbf{0.55$\times$} \\
    & LoRA                 & 3,737  & 2.55$\times$ & $1.58 \times 10^{18}$ & 2.12$\times$ \\
    & LoRA+ES              & 15,499 & 0.62$\times$ & $1.58 \times 10^{18}$ & 2.12$\times$ \\
    & LoRA+GradES          & 3,370  & \textbf{2.83$\times$} & $1.40 \times 10^{18}$ & 1.88$\times$ \\
\midrule
\multirow{6}{*}{Mistral-7B}
    & Full Parameter(Base) & 9,256  & 1.00$\times$ & $6.38 \times 10^{17}$ & 1.00$\times$ \\
    & FP+ES                & 14,521 & 0.64$\times$ & $6.38 \times 10^{17}$ & 1.00$\times$ \\
    & FP+GradES            & 6,996  & 1.32$\times$ & $4.51 \times 10^{17}$ & \textbf{0.71$\times$} \\
    & LoRA                 & 3,752  & 2.47$\times$ & $1.51 \times 10^{18}$ & 2.37$\times$ \\
    & LoRA+ES              & 11,159 & 0.83$\times$ & $1.51 \times 10^{18}$ & 2.37$\times$ \\
    & LoRA+GradES          & 3,259  & \textbf{2.84$\times$} & $1.32 \times 10^{18}$ & 2.07$\times$ \\
\bottomrule
\end{tabular}
\end{table}

\begin{table}[H]
\centering
\caption{Time and FLOPs comparison of \textit{GradES} on vision-language model for Qwen2.5-VL-7B across different fine-tuning methods.}
\label{tab:VLM-timing-flops-comparison}
\begin{tabular}{@{}llcccc@{}}
\toprule
\textbf{Model} & \textbf{Method} & \textbf{Training Time (s)} & \textbf{Speedup} & \textbf{FLOPs} & \textbf{FLOPs Ratio} \\
\midrule
\multirow{4}{*}{Qwen2.5-VL-7B}
    & Full Parameter  & 157,239 & 1.00$\times$ & $2.27 \times 10^{19}$ & 1.00$\times$ \\
    & FP+GradES       & 134,686 & 1.17$\times$ & $1.99 \times 10^{19}$ & \textbf{0.88$\times$} \\
    & LoRA            & 103,466 & 1.52$\times$ & $2.80 \times 10^{19}$ & 1.23$\times$ \\
    & LoRA+GradES     & 87,572  & \textbf{1.80$\times$} & $2.52 \times 10^{19}$ & 1.11$\times$ \\
\bottomrule
\end{tabular}%
\end{table}

\subsection{Ablation Studies}
We first investigate the necessity of the grace period $\alpha$. When we remove the grace period entirely ($\alpha = 0$), training terminates immediately after initialization. As illustrated in Figure~\ref{fig:within_layer_gradient}, gradient norms across all components remain small during the warm-up phase, falling below the convergence threshold $\tau$. This observation confirms that the grace period is crucial for our method. 

Then we evaluate the impact of the grace period $\alpha \in \{0.1, 0.2, 0.3, 0.4, 0.5, 0.6\}$ and convergence threshold $\tau \in \{1.5, 3.0, 4.5, 6.0, 7.5, 9.0\}$ on both model performance and training efficiency. All experiments are conducted on the Qwen-14B model across eight benchmark tasks: BoolQ, PIQA, SIQA, HellaSwag, WinoGrande, OpenBookQA, ARC-Challenge, and ARC-Easy. Table~\ref{tab:ablation-performance} reports the average evaluation accuracy across these benchmarks, while Table~\ref{tab:ablation-time} presents the corresponding training times.

Our results reveal that optimal performance (92.81\% average accuracy) is achieved with $\tau = 1.5$ and $\alpha = 0.5$, suggesting that a moderate grace period combined with a convergence threshold yields the best generalization. Conversely, the most significant speedup occurs at $\tau = 9.0$ and $\alpha = 0.1$, where the large threshold enables early stopping of most components and the minimal grace period starts of freezing.

\begin{table}[H]
\centering
\caption{Accuracy values are shown as percentages for threshold ($\tau$) and grace period ($\alpha$) parameters, with the best result highlighted in bold.}
\label{tab:ablation-performance}
\begin{tabular*}{\textwidth}{@{\extracolsep{\fill}}lcccccc@{}}
\toprule
\textbf{$\tau$ / $\alpha$} & \textbf{0.1} & \textbf{0.2} & \textbf{0.3} & \textbf{0.4} & \textbf{0.5} & \textbf{0.6} \\
\midrule
1.5 & 90.64 & 90.59 & 90.77 & 90.68 & \textbf{90.93} & 90.83 \\
3.0 & 90.58 & 90.70 & 90.51 & 90.61 & 90.54 & 90.83 \\
4.5 & 90.50 & 90.69 & 90.62 & 90.61 & 90.74 & 90.80 \\
6.0 & 90.46 & 90.68 & 90.67 & 90.58 & 90.68 & 90.78 \\
7.5 & 90.24 & 90.27 & 90.58 & 90.61 & 90.70 & 90.78 \\
9.0 & 90.01 & 90.35 & 90.34 & 90.66 & 90.71 & 90.71 \\
\bottomrule
\end{tabular*}
\end{table}

\begin{table}[H]
\centering
\caption{Fine-tuning time for threshold ($\tau$) and grace period ($\alpha$) parameters, with the fastest time highlighted in bold.}
\label{tab:ablation-time}
\begin{tabular*}{\textwidth}{@{\extracolsep{\fill}}lcccccc@{}}
\toprule
\textbf{$\tau$ / $\alpha$} & \textbf{0.1} & \textbf{0.2} & \textbf{0.3} & \textbf{0.4} & \textbf{0.5} & \textbf{0.6} \\
\midrule
1.5 & 10726.74 & 10592.54 & 10629.99 & 10840.85 & 11187.48 & 11774.73 \\
4.5 & 8925.71 & 9269.45 & 9317.26 & 9789.08 & 10273.89 & 11318.94 \\
7.5 & 8038.52 & 8337.48 & 8599.51 & 9262.74 & 9962.62 & 11087.04 \\
9.0 & \textbf{7393.14} & 7746.00 & 8036.02 & 9130.12 & 9477.76 & 10916.65 \\
\bottomrule
\end{tabular*}
\end{table}

\section{Discussion}
\label{section:analysis}
\subsection{Convergence Patterns Across Different Models}
Figure~\ref{fig:cumulative_frozen_components} shows the progression of the matrix that converged across different model scales during training. After a warm-up period of $\alpha = 1000$ steps, our method begins freezing converged components based on gradient magnitude thresholds $\tau$ (model-specific values detailed in Appendix~\ref{section:hyperparameter}).

The difference in convergence rate reflects distinct architectural behaviors. Larger models (7B-14B) exhibit rapid convergence, with the majority of components frozen by step 1400—approximately 40\% through training. In contrast, the smaller Qwen-0.6B model demonstrates delayed convergence, with no components meeting the freezing criteria until step 1600. 

Notably, the threshold $\tau$ varies significantly between training methods, and full fine-tuning requires larger thresholds compared to LoRA adaptation. 

\begin{figure}[H] \centering \includegraphics[width=0.8\textwidth]{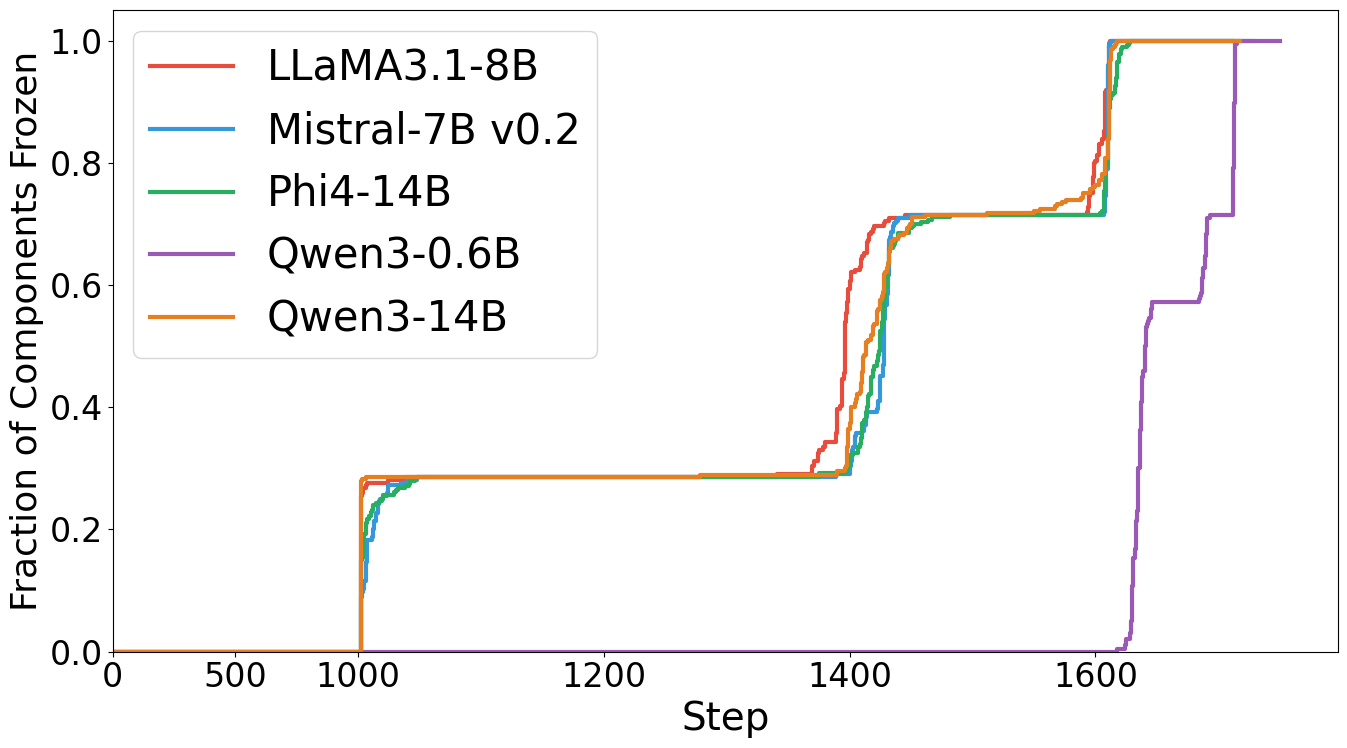} \caption{Cumulative frozen components during training across model scales. Fraction of weight matrices frozen over time for five different LLMs.} \label{fig:cumulative_frozen_components} 
\end{figure}

\subsection{Attention versus MLP components}
Figure~\ref{fig:all_component_gradient} presents the gradient norms $|\nabla W|$ across all weight matrices during fine-tuning of the Qwen-0.6B model. We compute the element-wise L1 norm for each weight matrix and report averaged values for two architectural components: MLP matrices ($\mathbf{W}_{\text{up}}$, $\mathbf{W}_{\text{down}}$; orange) and attention projection matrices ($\mathbf{W}_q$, $\mathbf{W}_k$, $\mathbf{W}_v$, $\mathbf{W}_o$; blue).

We have two key observations. First, the gradient trend reflects the cosine learning rate schedule: initial warmup drives increasing gradient magnitudes as the model adapts to the task distribution, while the subsequent cosine decay produces monotonically decreasing gradient norms, enabling smooth convergence from exploration to refinement.

Second, and more critically for our method, MLP weight matrices consistently maintain larger gradient norms compared to attention projection matrices throughout training. This persistent gap indicates that MLP parameters require more steps to converge, suggesting inefficiency under uniform training strategies. This observation directly motivates our approach: by dynamically allocating computational resources proportional to gradient magnitudes, we can accelerate convergence of slower learning components while maintaining overall model accuracy.

\begin{figure}[H]
\centering
\begin{subfigure}{0.48\textwidth}
    \centering
    \includegraphics[width=\textwidth]{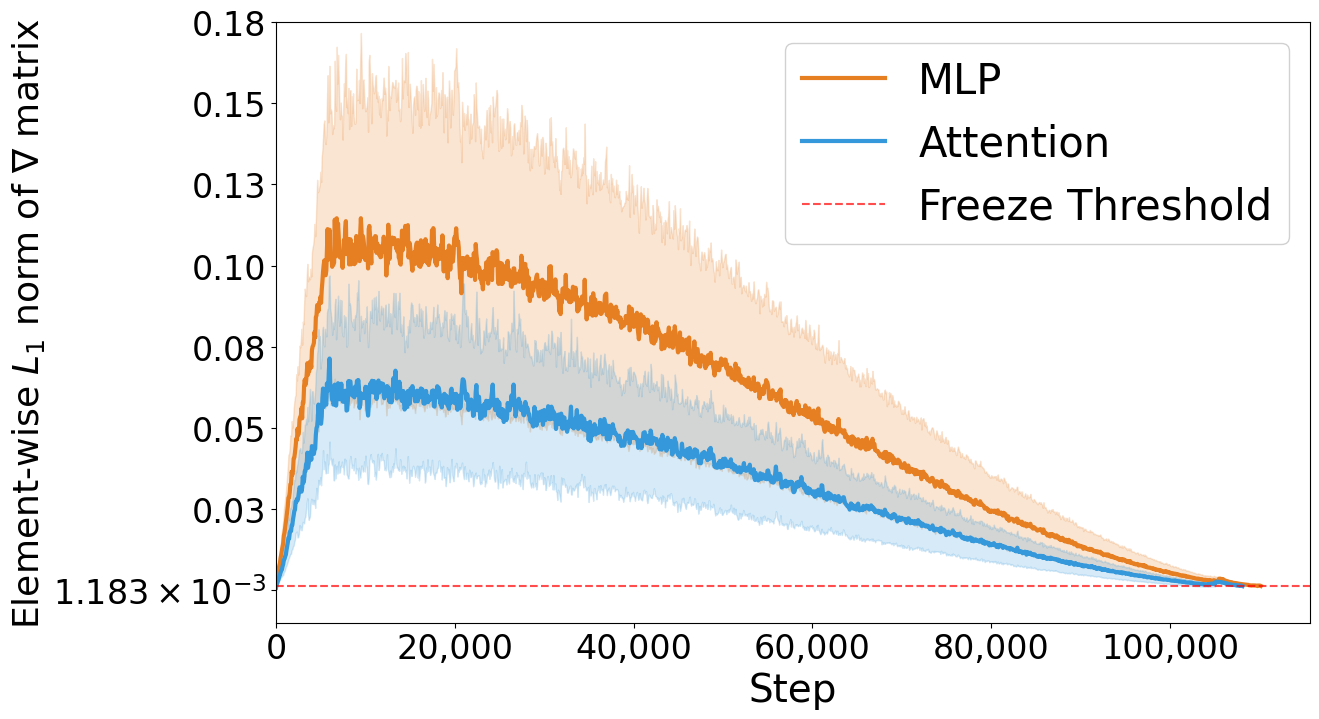}
    \caption{Gradient norm evolution during Qwen-0.6B fine-tuning. Element-wise L1 norms of weight gradients averaged across layers for MLP matrices (orange) and attention projections (blue). MLP matrices consistently exhibit larger gradient magnitudes throughout training, indicating slower convergence and motivating targeted computational allocation.}
    \label{fig:all_component_gradient}
\end{subfigure}
\hfill
\begin{subfigure}{0.48\textwidth}
    \centering
    \includegraphics[width=\textwidth]{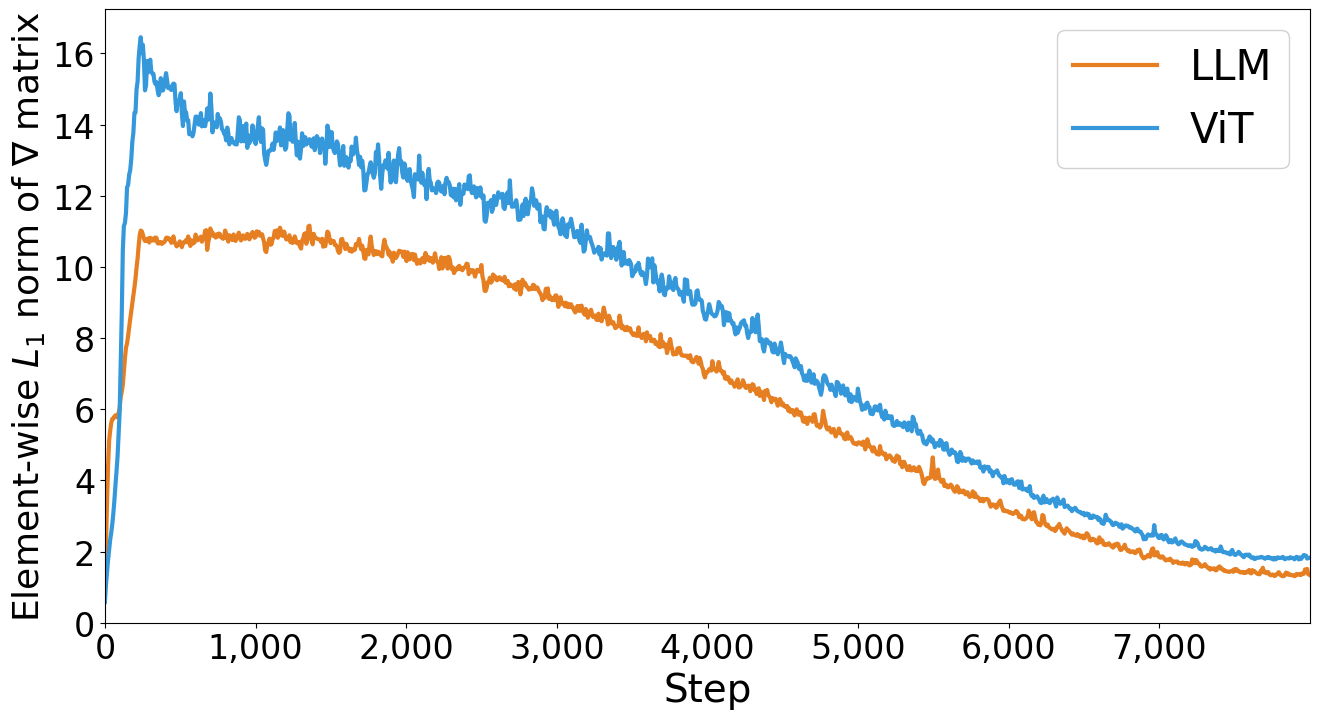}
    \caption{Gradient norm evolution during NanoVLM training. Element-wise L1 norms of weight gradients averaged across Vision layers matrices (blue) and attention projections (orange). ViT matrices consistently exhibit larger gradient magnitudes throughout training.}
    \label{fig:vit_component_gradient}
\end{subfigure}
\caption{Comparison of gradient norm evolution patterns across different model architectures during fine-tuning.}
\label{fig:gradient_comparison}
\end{figure}

\subsection{LLMs vs VLMs}
Perhaps a similarly interesting discovery we made is with the vision language model. Figure~\ref{fig:gradient_comparison} illustrates the training dynamics of the vision and language transformer components separately. Our analysis reveals an difference in convergence pattern like that between attention and MLP components: while the language transformer reaches convergence early in training, the vision transformer converges slower. However, extending training with a decreasing learning rate schedule leads to overfitting and subsequent accuracy degradation.

Our approach demonstrates significant accuracy improvements over full parameter fine-tuning. With only 8,000 training examples, we achieve 34.7\% accuracy—representing a 3.92\% absolute improvement over the baseline full fine-tuning approach. Remarkably, the original Nano-VLM requires approximately 17,000 examples to reach comparable performance levels\cite{wiedmann2025nanovlm}, indicating that our method reduces the required training data by more than 50\% while maintaining competitive accuracy.

\subsection{Combination with Parameter-Efficient Fine-Tuning Methods}
\textit{GradES} combines particularly well with parameter-efficient fine-tuning methods such as LoRA. As shown in Table~\ref{tab:timing-flops-comparison}, LoRA+GradES achieves the fastest training times across all models, reducing training time to just 0.14--0.38$\times$ of standard full-parameter fine-tuning. This dramatic speedup comes from optimizing two independent dimensions, (i) LoRA reduces the parameter space through low-rank weight decomposition, (ii) while \textit{GradES} reduces training iterations by detecting convergence through gradient monitoring. For example, on Qwen3 0.6B, LoRA+GradES completes training in just 907 seconds compared to 6,550 seconds for standard fine-tuning, a 7.22$\times$ speedup, while achieving better accuracy (67.30\% vs. 66.53\%).

The benefits of combining \textit{GradES} with LoRA become especially clear when compared to standard early stopping. While LoRA+ES actually increases training time by 2.97--6.90$\times$ due to validation overhead, LoRA+GradES maintains or improves LoRA's efficiency. This difference suggests that gradient-based early stopping criteria work particularly well in LoRA's constrained optimization landscape, where the reduced parameter space provides clearer convergence signals. Although LoRA methods inherently require more FLOPs per iteration than full-parameter training (2.07--2.43$\times$), \textit{GradES} compensates by reducing the total number of iterations needed. The result is a practical for deployment due to the extremely fast training times with reasonable computational costs, making LoRA+GradES the optimal choice for fine-tuning.

\subsection{GradES versus Classic Early Stopping}

 A fundamental limitation of classic early stopping is the computational expense of requiring complete forward passes for every validation step. In contrast, \textit{GradES} reuses gradient information from backpropagation, yielding substantial computational savings. As shown in Table~\ref{tab:timing-flops-comparison}, it achieves up to 6.79$\times$ speedup compared to classic early stopping on Qwen3-0.6B (907s vs 6,155s for LoRA fine-tuning), while maintaining comparable accuracy. The minimal accuracy difference---67.30\% for \textit{GradES} versus 67.37\% for classic early stopping as shown in Table~\ref{tab:LLM_accuracy_results}---does not justify the significant computational overhead. Across all five models tested, \textit{GradES} consistently reduces training time by 35--66\% while achieving a 45--71\% reduction in FLOPs compared to baseline full fine-tuning, making it particularly valuable for limited resource settings.

Furthermore, classic early stopping employs a model-wide convergence criterion that is not suitable for Transformer architectures. As demonstrated in Figure~\ref{fig:within_layer_gradient}, different weight matrices within transformer layers exhibit varying convergence rates. Model-wide early stopping fails to account for this heterogeneity, potentially leading to overfitting in some parameters while underfitting in others. \textit{GradES} addresses this limitation by enabling component-specific convergence criteria, allowing MLP and attention components to be trained independently until each reaches its optimal stopping point. This ensures all weight matrices converge according to appropriate criteria rather than being halted or unnecessarily extended based on global validation metrics. 

\section{Limitation}
\label{section:limitation}
While \textit{GradES} demonstrates substantial efficiency gains, some limitations exists. First, gradient monitoring incurs approximately 3\% computational overhead, though this is negligible compared to the 1.57--7.22$\times$ training time speed up achieved. Second, the convergence threshold $\tau$ requires manual tuning across different models and tasks, with full-parameter fine-tuning requiring larger thresholds than LoRA and model scale affecting optimal values. For VLMs, we observe that vision and language components convergence at a different rate, motivating component specific thresholds for optimal performance. Third, our current implementation employs static freezing without patience mechanisms common in traditional early stopping, potentially leading to premature convergence decisions. Additionally, while we validate \textit{GradES} on transformer architectures, its applicability to other neural architectures, such as convolutional networks, graph neural networks, and emerging architectures like state space models, remains unexplored.

\section{Conclusion and Future work}
\label{section:conclusion}
Several promising directions emerge for future work. Automatic threshold selection through gradient statistics or meta learning could eliminate manual tuning, while incorporating patience parameters would allow components to temporarily violate convergence criteria before freezing. Dynamic freezing and unfreezing mechanisms could adapt to task complexity and distribution shifts, particularly beneficial when combined with specific thresholds for MLP versus attention components as suggested by Figure~\ref{fig:all_component_gradient}. Integration with complementary efficiency techniques like mixed precision training, gradient checkpointing, and structured pruning could yield multiplicative speedups. Extending \textit{GradES} beyond transformers to vision models, graph networks, and hybrid architectures would broaden its impact. Most ambitiously, applying gradient stopping to pretraining could significantly reduce the massive computational costs of foundation model development, while theoretical analysis of convergence criteria would provide principled guidelines for threshold selection and accuracy guarantees.

The key insight underlying our approach is that different transformer components exhibit distinct convergence behaviors during fine-tuning. In VLMs particularly, we observe that language transformers converge earlier than vision transformers. By recognizing and exploiting these diverse convergence patterns, \textit{GradES} allocates computational resources more efficiently than uniform training strategies. The method's ability to eliminate costly validation passes while providing precise convergence control represents a practical advancement for deploying large language models with limited resources. As the scale of language models continues to grow, gradient optimization strategies like \textit{GradES} will become increasingly critical for making these powerful models accessible to the broader research community and enabling rapid experimentation in real-world applications.

\bibliographystyle{unsrt} 
\bibliography{references} 
\newpage 

\appendix
\section{Theoretical Analysis: Selection of Norm}

The choice of norm for gradient monitoring significantly impacts both computational efficiency and convergence detection reliability. For a gradient matrix $G \in \mathbb{R}^{m \times n}$, we consider four matrix norm candidates: element-wise $L_1$ ($\|G\|_{1,1} = \sum_{i,j} |g_{ij}|$), Frobenius ($\|G\|_F = \sqrt{\sum_{i,j} g_{ij}^2}$), spectral ($\|G\|_2 = \sigma_{\max}(G)$), and subordinate L$_\infty$ ($\|G\|_\infty = \max_i \sum_j |g_{ij}|$).

We select the element-wise $L_1$ norm based on computational efficiency and convergence properties. The $L_1$ norm requires only $O(mn)$ operations through element-wise summation, avoiding expensive computations such as square roots (Frobenius) or singular value decomposition (spectral, $O(mn\min(m,n))$). Furthermore, the $L_1$ norm provides a stronger convergence criterion through the following theorem:

\begin{theorem}
For any matrix $A \in \mathbb{R}^{m \times n}$, the elementwise $L_1$ norm provides an upper bound for commonly used matrix norms:
\begin{align}
\|A\|_2 &\leq \|A\|_{1,1} \label{eq:spectral_bound}\\
\|A\|_F &\leq \|A\|_{1,1} \label{eq:frobenius_bound}\\
\|A\|_\infty &\leq \|A\|_{1,1} \label{eq:infty_bound}\\
\|A\|_1 &\leq \|A\|_{1,1} \label{eq:$L_1$_bound}
\end{align}
where $\|A\|_1 = \max_j \sum_i |a_{ij}|$ denotes the subordinate $L_1$ norm (maximum column sum).
\end{theorem}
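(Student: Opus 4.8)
The plan is to establish the four inequalities by ordering them from the most elementary to those requiring a short chain of standard facts, reusing earlier bounds where possible. I would begin with the two subordinate operator norms, since these are immediate. The $L_\infty$ bound \eqref{eq:infty_bound} follows because $\|A\|_\infty = \max_i \sum_j |a_{ij}|$ is a maximum over row sums, and a single row sum is trivially dominated by the sum over all rows: $\max_i \sum_j |a_{ij}| \le \sum_i \sum_j |a_{ij}| = \|A\|_{1,1}$. The identical argument with the roles of rows and columns exchanged handles the subordinate $L_1$ (maximum column sum) bound, giving $\max_j \sum_i |a_{ij}| \le \|A\|_{1,1}$.

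Next I would prove the Frobenius bound \eqref{eq:frobenius_bound}, which rests on a single elementary observation: for any finite collection of reals, the sum of squares never exceeds the square of the sum of absolute values. Concretely, expanding $\left(\sum_{i,j}|a_{ij}|\right)^2$ produces the diagonal terms $\sum_{i,j} a_{ij}^2$ together with a nonnegative collection of cross terms, so $\|A\|_F^2 = \sum_{i,j} a_{ij}^2 \le \|A\|_{1,1}^2$. Taking square roots of both nonnegative sides yields \eqref{eq:frobenius_bound} directly.

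Finally, for the spectral bound \eqref{eq:spectral_bound} I would not argue from the variational characterization $\|A\|_2 = \max_{\|x\|_2 = 1}\|Ax\|_2$ but instead chain through the Frobenius norm. The largest singular value is dominated by the root-sum-of-squares of all singular values, which I would justify via the identity $\|A\|_F^2 = \operatorname{tr}(A^\top A) = \sum_k \sigma_k^2(A)$, giving $\|A\|_2 = \sigma_{\max}(A) \le \sqrt{\sum_k \sigma_k^2(A)} = \|A\|_F$. Composing this with the Frobenius bound just established delivers $\|A\|_2 \le \|A\|_F \le \|A\|_{1,1}$.

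None of the steps presents a genuine obstacle; the only point requiring care is the invocation of $\|A\|_2 \le \|A\|_F$ in the last step, which I would justify through the singular-value identity above rather than leave as folklore, keeping the argument self-contained. The remaining inequalities are either combinatorial (a maximum bounded by a total sum) or follow from the one algebraic fact about sums of squares, so the full write-up should be short and self-contained.
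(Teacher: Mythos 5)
Your proof is correct, and all four inequalities are established soundly, but your route to the spectral bound is genuinely different from the paper's. The paper proves \eqref{eq:spectral_bound} via the interpolation inequality $\|A\|_2 \le \sqrt{\|A\|_1\,\|A\|_\infty}$ for induced norms, invoked as a known fact, and then bounds both subordinate norms by $\|A\|_{1,1}$; you instead chain through the Frobenius norm, $\|A\|_2 \le \|A\|_F \le \|A\|_{1,1}$, justifying the first step by the identity $\|A\|_F^2 = \operatorname{tr}(A^\top A) = \sum_k \sigma_k^2(A)$. Your route is more self-contained: the paper's interpolation inequality is itself a nontrivial fact (usually obtained by applying the bound of the spectral radius by induced norms to $A^\top A$), whereas dominating $\sigma_{\max}$ by the root-sum-of-squares of all singular values is immediate once the trace identity is written down. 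The trade-off is the dependency structure: your spectral bound requires the Frobenius bound to be established first, while the paper's version leans on the two subordinate-norm bounds; both orderings are internally consistent. Your Frobenius argument (expanding $\left(\sum_{i,j}|a_{ij}|\right)^2$ and discarding the nonnegative cross terms) also differs in detail from the paper's (which bounds $a_{ij}^2 \le |a_{ij}|\cdot\|A\|_{1,1}$ elementwise and sums), but these are two one-line verifications of the same inequality $\|A\|_F^2 \le \|A\|_{1,1}^2$. The two subordinate-norm bounds are handled identically in both proofs.
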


\begin{proof}
For \eqref{eq:spectral_bound}, the spectral norm satisfies $\|A\|_2 \leq \sqrt{\|A\|_1 \cdot \|A\|_\infty}$ by the well-known inequality for induced norms. Since $\|A\|_1 = \max_j \sum_i |a_{ij}| \leq \sum_{i,j} |a_{ij}| = \|A\|_{1,1}$ and similarly $\|A\|_\infty \leq \|A\|_{1,1}$, we have:
$$\|A\|_2 \leq \sqrt{\|A\|_1 \cdot \|A\|_\infty} \leq \sqrt{\|A\|_{1,1} \cdot \|A\|_{1,1}} = \|A\|_{1,1}$$

For \eqref{eq:frobenius_bound}, note that $a_{ij}^2 \leq |a_{ij}| \cdot \max_{k,l}|a_{kl}| \leq |a_{ij}| \cdot \|A\|_{1,1}$ for any element. Summing over all indices:
$$\|A\|_F^2 = \sum_{i,j} a_{ij}^2 \leq \sum_{i,j} |a_{ij}| \cdot \|A\|_{1,1} = \|A\|_{1,1}^2$$
Therefore, $\|A\|_F \leq \|A\|_{1,1}$.

For \eqref{eq:infty_bound}, the subordinate L$_\infty$ norm is the maximum row sum: $\|A\|_\infty = \max_{1 \leq i \leq m} \sum_{j=1}^n |a_{ij}|$. Since the maximum row sum cannot exceed the sum of all elements: $\|A\|_\infty \leq \sum_{i=1}^m \sum_{j=1}^n |a_{ij}| = \|A\|_{1,1}$.

For \eqref{eq:$L_1$_bound}, similarly, the subordinate $L_1$ norm is the maximum column sum: $\|A\|_1 = \max_{1 \leq j \leq n} \sum_{i=1}^m |a_{ij}| \leq \sum_{i=1}^m \sum_{j=1}^n |a_{ij}| = \|A\|_{1,1}$.
\end{proof}

These relationships establish that monitoring the $L_1$ norm provides a universal upper bound for convergence detection. When $\|G\|_{1,1} < \tau$, we guarantee that all other standard matrix norms are also bounded by $\tau$, ensuring robust convergence detection across multiple norm perspectives while maintaining linear computational complexity.

\section{Convergence Properties}
\label{section:convergence}

We provide theoretical guarantees for the convergence of Algorithm~\ref{alg:GradES}. Our analysis demonstrates that \textit{GradES} converges to a stationary point of the loss function while ensuring computational efficiency through adaptive parameter freezing.

\begin{theorem}[Convergence of GradES]
\label{thm:gades_convergence}
Consider Algorithm~\ref{alg:GradES} applied to a loss function $\mathcal{L}: \mathbb{R}^d \rightarrow \mathbb{R}$ that is $L$-smooth and lower bounded by $\mathcal{L}^*$. With threshold $\tau > 0$, the algorithm satisfies:
\begin{enumerate}
    \item The loss sequence $\{\mathcal{L}(t)\}_{t=1}^T$ is non-increasing after warm up
    \item All frozen parameters satisfy $\|\nabla_{\mathbf{W}} \mathcal{L}\|_{1,1} < \tau$ at convergence
    \item The algorithm terminates in finite time with $\min_{t \in [T]} \|\nabla \mathcal{L}(t)\| \leq \tau$
\end{enumerate}
\end{theorem}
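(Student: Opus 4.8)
The plan is to treat the \textit{GradES} update as masked (coordinate-restricted) gradient descent and then lean on the standard descent lemma for $L$-smooth functions, using the fact that the frozen set $\mathcal{F}$ only grows over time. Write $x_t$ for the concatenation of all weight matrices, let $\mathcal{A}_t$ denote the active (unfrozen) blocks at step $t$, and let $P_{\mathcal{A}_t}$ be the coordinate projection onto those blocks. Lines 17--22 of Algorithm~\ref{alg:GradES} then read exactly as $x_{t+1} = x_t - \eta\,P_{\mathcal{A}_t}\nabla\mathcal{L}(x_t)$, since frozen blocks receive no update while active blocks take a full gradient step.

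For claim~(1), I would apply the descent lemma to this masked step. The key simplification is that coordinate projection is idempotent and self-adjoint, so $\langle \nabla\mathcal{L}(x_t),\,P_{\mathcal{A}_t}\nabla\mathcal{L}(x_t)\rangle = \|P_{\mathcal{A}_t}\nabla\mathcal{L}(x_t)\|^2$. Smoothness gives
\begin{equation}
\mathcal{L}(x_{t+1}) \leq \mathcal{L}(x_t) - \eta\Bigl(1 - \tfrac{L\eta}{2}\Bigr)\|P_{\mathcal{A}_t}\nabla\mathcal{L}(x_t)\|^2,
\end{equation}
so for any step size $\eta \leq 1/L$ the bracket is at least $1/2$ and the loss is non-increasing after the warm-up phase (during which the freezing rule is inactive). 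Notice this argument is insensitive to \emph{which} blocks are frozen: masked gradient descent descends regardless, so the freezing criterion plays no role in~(1).

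Claims~(2) and~(3) I would handle together via telescoping. Since $\mathcal{F}$ is updated only by union (line 10), it is monotone and can absorb each of the $7L$ monitored matrices at most once, so the algorithm either freezes everything or exhausts the budget $T$; in both cases it halts in finitely many steps. Summing the inequality above from the end of the warm-up to the terminal step $T^\star$ and using the lower bound $\mathcal{L}^*$ telescopes to $\tfrac{\eta}{2}\sum_t \|P_{\mathcal{A}_t}\nabla\mathcal{L}(x_t)\|^2 \leq \mathcal{L}_0 - \mathcal{L}^*$, whence $\min_t \|P_{\mathcal{A}_t}\nabla\mathcal{L}(x_t)\|^2 \leq 2(\mathcal{L}_0 - \mathcal{L}^*)/(\eta T^\star)$. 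For the frozen blocks I would invoke the freezing rule directly: each block enters $\mathcal{F}$ only once its convergence metric drops below $\tau$, so decomposing $\|\nabla\mathcal{L}\|^2 = \|P_{\mathcal{A}_t}\nabla\mathcal{L}\|^2 + \|P_{\mathcal{F}_t}\nabla\mathcal{L}\|^2$ and taking the terminal state $\mathcal{A}_{T^\star} = \emptyset$ leaves the full gradient equal to its frozen part, each component of which is threshold-controlled.

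The main obstacle is reconciling the two notions of ``gradient'' in play. The algorithm freezes on the \emph{gradient change} $\|\nabla\mathbf{W}_t - \nabla\mathbf{W}_{t-1}\|_1 < \tau$ (Equation~\ref{equation:gradient_norm_definition}), whereas claims~(2)--(3) are stated in terms of the \emph{gradient magnitude} $\|\nabla_{\mathbf{W}}\mathcal{L}\|_{1,1} < \tau$. These coincide only near a stationary point, so I would either adopt the gradient-magnitude metric used in the prose description of the method (making~(2) definitional) or add the hypothesis that on the frozen blocks consecutive gradients stay close, which together with a small change forces a small magnitude. A secondary quantitative point: summing the per-matrix thresholds over up to $7L$ frozen blocks yields a full-gradient bound of order $7L\tau$ rather than $\tau$, so claim~(3) should be read per component (or $\tau$ rescaled by the block count), which I would state explicitly when fixing constants.
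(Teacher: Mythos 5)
Your descent-lemma core is sound and is essentially the same engine the paper uses, but your formalization is better on three counts. First, writing the update as masked gradient descent $x_{t+1} = x_t - \eta\,P_{\mathcal{A}_t}\nabla\mathcal{L}(x_t)$ and using that $P_{\mathcal{A}_t}$ is idempotent and self-adjoint is the correct way to obtain the descent inequality; the paper instead applies smoothness block-by-block (its Part 1), which is informal, and it invokes a cosine learning-rate schedule with $\eta_0 < 2/L$ that appears nowhere in Algorithm~\ref{alg:GradES} or in the theorem statement. Second, your finite-termination argument (the frozen set $\mathcal{F}$ grows monotonically over at most $7L$ blocks and the loop is capped at $T$) matches what the algorithm actually does, whereas the paper's Part 3 leans on $\eta_t \to 0$ and on gradient decay that it explicitly attributes to experiment. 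Third, you are right --- and the paper is not --- about the metric mismatch: the algorithm freezes on the gradient \emph{change} $\|\nabla\mathbf{W}_t - \nabla\mathbf{W}_{t-1}\|_1 < \tau$ (Equation~\ref{equation:gradient_norm_definition}), while the paper's Part 2 silently restates the freezing rule as $\|\nabla_{\mathbf{W}}\mathcal{L}(t_f)\|_{1,1} < \tau$ and proceeds from there; your proposal to either change the monitored metric or add an explicit closeness hypothesis is the honest repair. Your observation that aggregating per-block thresholds gives a full-gradient bound of order $7L\tau$ rather than $\tau$ is likewise correct and is not acknowledged in the paper.

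The one genuine gap in your proposal is the drift of frozen-block gradients after freezing, which your terminal-state decomposition does not control. When block $\mathbf{W}$ is frozen at time $t_f$, whatever threshold control you have holds for $\nabla_{\mathbf{W}}\mathcal{L}(x_{t_f})$; but the active blocks keep moving for $t > t_f$, so $\nabla_{\mathbf{W}}\mathcal{L}(x_{T^\star})$ can exceed $\tau$ even though $\nabla_{\mathbf{W}}\mathcal{L}(x_{t_f})$ did not. Claim (2) is a statement ``at convergence,'' so saying each frozen component ``is threshold-controlled'' conflates control at $t_f$ with control at $T^\star$. The paper bridges exactly this hole by asserting that gradient magnitudes are non-increasing after warmup, but that does not follow from its Part 1 (which proves the \emph{loss} is non-increasing, not the gradients), and the paper itself falls back on ``in experiment, gradient magnitudes decrease monotonically.'' To close the gap you need either that monotonicity as an explicit hypothesis, or a smoothness drift bound of the form
\begin{equation}
\|\nabla_{\mathbf{W}}\mathcal{L}(x_t) - \nabla_{\mathbf{W}}\mathcal{L}(x_{t_f})\| \;\leq\; L\|x_t - x_{t_f}\| \;\leq\; L\sum_{s=t_f}^{t-1}\eta\,\|P_{\mathcal{A}_s}\nabla\mathcal{L}(x_s)\| \;\leq\; L\sqrt{2\eta\,(t-t_f)\,\bigl(\mathcal{L}(x_{t_f})-\mathcal{L}^*\bigr)},
\end{equation}
where the last step combines Cauchy--Schwarz with your own telescoped sum; this yields claim (2) only with $\tau$ inflated by that drift term, which should be stated explicitly. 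With that caveat added (and the metric fixed as you suggest), your argument is complete and strictly more rigorous than the paper's.
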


\begin{proof}
\textbf{Part 1: Monotonic Loss Decrease.}
After the warmup period ($t > 0.05T$), the cosine schedule ensures $\eta_t$ is decreasing. For any active parameter $\mathbf{W} \notin \mathcal{F}$ at step $t$, the update rule yields:
\begin{align}
\mathcal{L}(\mathbf{W}_{t+1}) &\leq \mathcal{L}(\mathbf{W}_t) + \langle \nabla \mathcal{L}(\mathbf{W}_t), \mathbf{W}_{t+1} - \mathbf{W}_t \rangle + \frac{L}{2}\|\mathbf{W}_{t+1} - \mathbf{W}_t\|^2 \\
&= \mathcal{L}(\mathbf{W}_t) - \eta_t \|\nabla \mathcal{L}(\mathbf{W}_t)\|^2 + \frac{L\eta_t^2}{2}\|\nabla \mathcal{L}(\mathbf{W}_t)\|^2
\end{align}

For frozen parameters $\mathbf{W} \in \mathcal{F}$, we have $\mathbf{W}_{t+1} = \mathbf{W}_t$, thus $\mathcal{L}(\mathbf{W}_{t+1}) = \mathcal{L}(\mathbf{W}_t)$. The cosine schedule with maximum value $\eta_0 < \frac{2}{L}$ ensures:
\begin{equation}
\mathcal{L}(t+1) \leq \mathcal{L}(t) - \sum_{\mathbf{W} \notin \mathcal{F}} \eta_t \left(1 - \frac{L\eta_t}{2}\right) \|\nabla_{\mathbf{W}} \mathcal{L}(t)\|^2
\end{equation}

Since $\eta_t \leq \eta_0 < \frac{2}{L}$ throughout training, the loss sequence is non-increasing.

\textbf{Part 2: Frozen Parameters at Stationary Points.}
A parameter matrix $\mathbf{W}$ is frozen at step $t_f$ when $\|\nabla_{\mathbf{W}} \mathcal{L}(t_f)\|_{1,1} < \tau$. Since frozen parameters receive no further updates:
\begin{equation}
\mathbf{W}_{t} = \mathbf{W}_{t_f} \quad \forall t > t_f
\end{equation}

As shown in Part 1, the gradient magnitudes are non-increasing after warmup. Combined with the continuity of gradients:
\begin{equation}
\|\nabla_{\mathbf{W}} \mathcal{L}(t)\|_{1,1} \leq \|\nabla_{\mathbf{W}} \mathcal{L}(t_f)\|_{1,1} < \tau \quad \forall t > t_f > 0.05T
\end{equation}

This ensures that once a parameter is frozen, its gradient remains below the threshold.

\textbf{Part 3: Finite-Time Termination.}
Define the active parameter set at time $t$ as $\mathcal{A}_t = \{\mathbf{W} : \mathbf{W} \notin \mathcal{F}_t\}$. The cardinality $|\mathcal{A}_t|$ is non-increasing since parameters can only transition from active to frozen. 

Under the cosine schedule, as $t \rightarrow T$, we have $\eta_t \rightarrow 0$. In experiment, gradient magnitudes decrease monotonically after warmup. Therefore, there exists a finite $T^* < T$ such that either:
\begin{itemize}
    \item All parameters satisfy $\|\nabla_{\mathbf{W}} \mathcal{L}\|_{1,1} < \tau$ and are frozen, or
    \item The cosine schedule drives $\eta_t \|\nabla \mathcal{L}(t)\| < \epsilon$ for arbitrarily small $\epsilon$
\end{itemize}

In both cases, the algorithm effectively converges with $\min_{t \in [T]} \|\nabla \mathcal{L}(t)\| \leq \tau$.
\end{proof}

\begin{corollary}
\textit{GradES} achieves an $\epsilon$-stationary point while potentially reducing computational cost by a factor proportional to $|\mathcal{F}|/d$, where $d$ is the total number of parameters. The cosine schedule ensures smooth convergence without oscillations in gradient magnitudes.
\end{corollary}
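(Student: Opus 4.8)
The plan is to derive all three claims directly from the three conclusions of Theorem~\ref{thm:gades_convergence}, treating the corollary as a quantitative repackaging rather than an independent result. First I would dispatch the $\epsilon$-stationarity claim: Part 3 of the theorem already guarantees finite-time termination with $\min_{t \in [T]} \|\nabla \mathcal{L}(t)\| \leq \tau$. Instantiating the threshold as $\tau = \epsilon$ (or any $\tau \leq \epsilon$) immediately certifies that the returned iterate is $\epsilon$-stationary. Moreover, since freezing is triggered by $\|\nabla_{\mathbf{W}} \mathcal{L}\|_{1,1} < \tau$, the elementwise-$L_1$ domination inequalities established earlier ($\|A\|_2, \|A\|_F, \|A\|_\infty, \|A\|_1 \leq \|A\|_{1,1}$) upgrade this to a guarantee that \emph{every} standard matrix norm of the frozen gradients also lies below $\epsilon$, so the stationarity certificate is robust to the choice of norm.

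For the computational-cost clause I would set up a per-step accounting of the optimizer update. At each step $t$ past the grace period, only the active set $\mathcal{A}_t = \{\mathbf{W} : \mathbf{W} \notin \mathcal{F}_t\}$ receives a parameter update (Algorithm~\ref{alg:GradES}, lines 15--22), so the number of updated scalar parameters is $d - |\mathcal{F}_t|$. Because Part 3 shows $|\mathcal{A}_t|$ is non-increasing, equivalently $|\mathcal{F}_t|$ is non-decreasing, the frozen set saturates at a terminal $\mathcal{F}$, and the fraction of update work eliminated over the trajectory is bounded below by a quantity proportional to $|\mathcal{F}|/d$. Adding the whole-step savings from the early-break rule (line 24) once $|\mathcal{F}| = d$ yields the stated factor. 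The monotonicity of $|\mathcal{F}_t|$ is precisely what makes this summation well-defined and the bound clean.

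For the smoothness-of-convergence clause I would lean on the descent inequality in Part 1. The cosine schedule enforces $\eta_t \leq \eta_0 < 2/L$ throughout, so each active coordinate contributes a nonnegative term $\eta_t\left(1 - \frac{L\eta_t}{2}\right)\|\nabla_{\mathbf{W}}\mathcal{L}(t)\|^2$ to the loss decrease, giving a monotone non-increasing loss; as $t \to T$ the schedule drives $\eta_t \to 0$, shrinking the step magnitudes and precluding overshoot. Combining monotone loss decrease with vanishing step size is the mechanism I would invoke to argue the absence of oscillations in the gradient trace.

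The main obstacle I anticipate is making the computational-cost statement genuinely rigorous rather than heuristic. The gradient-flow-preservation design (line 12) means frozen matrices still participate in the backward pass, so the savings cannot be attributed to skipping gradient computation; they arise only from the skipped optimizer updates and the early break. Pinning down exactly which FLOPs the factor $|\mathcal{F}|/d$ governs, and reconciling that with the larger end-to-end reductions reported in Table~\ref{tab:timing-flops-comparison}, is the delicate part. A related softer point is that the ``no oscillations'' assertion is only fully justified under the empirically observed monotone gradient decay invoked in Part 3; I would therefore phrase that clause as a consequence of the descent lemma together with the decaying schedule, rather than as an unconditional guarantee.
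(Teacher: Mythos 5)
Your derivation is correct and takes essentially the same route as the paper, which in fact states this corollary with no separate proof at all, treating it as an immediate consequence of Theorem~\ref{thm:gades_convergence}: your instantiation $\tau = \epsilon$ for stationarity (with the norm-domination upgrade from the $L_1$ bound), the frozen-set counting of skipped updates for the $|\mathcal{F}|/d$ factor, and the descent inequality with vanishing cosine step size for the no-oscillation clause are exactly that consequence spelled out. Your two caveats---that the savings concern only optimizer updates and the early break (since gradients still flow through frozen matrices), and that the no-oscillation claim rests on the empirically assumed monotone gradient decay invoked in Part~3---are, if anything, more careful than the paper's own treatment.
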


This analysis establishes that \textit{GradES} maintains convergence guarantees under the practical cosine learning rate schedule used in our experiments. The threshold $\tau$ controls the trade-off between convergence accuracy and computational savings, while the 5\% warmup period ensures stable gradient behavior before monitoring begins.

\section{Hyperparameter Configuration}
\label{section:hyperparameter}
We present comprehensive hyperparameter configurations to ensure experimental reproducibility. Tables~\ref{tab:basic-hyperparameters}--\ref{tab:grades-hyperparameters} detail the training configurations for five language models—Qwen3-14B, Phi4-14B, Qwen3-0.6B, Llama-3.1-8B, and Mistral-7B—under both full-parameter (FP) fine-tuning and Low-Rank Adaptation (LoRA). For all experiments, we employ early stopping with a validation loss threshold of $\delta = 0.0005$, patience of 3 epochs, and validation performed at 5\% intervals throughout training. Tables~\ref{tab:vlm-hyperparameters}--\ref{tab:vlm-efficiency} additionally present configurations and efficiency metrics for vision-language model experiments.

\begin{table}[H]
\centering
\caption{Hyperparameter configuration for language model fine-tuning experiments. FP denotes full-parameter fine-tuning.}
\label{tab:basic-hyperparameters}
\begin{tabular}{@{}lllcccc@{}}
\toprule
\textbf{Model} & \textbf{Method} & \textbf{Learning Rate} & \textbf{Batch Size} & \textbf{Grad Accum} & \textbf{Max Seq Len} & \textbf{LoRA Rank} \\
\midrule
\multirow{2}{*}{Qwen3 14B} 
& FP         & 2e-5  & 1  & 4 & 4096 & - \\
& LoRA       & 2e-4  & 16 & 4 & 4096 & 32 \\
\midrule
\multirow{2}{*}{Phi4 14B}
& FP         & 2e-5 & 1  & 4 & 4096 & - \\
& LoRA       & 2e-4   & 16 & 4 & 4096 & 32 \\
\midrule
\multirow{2}{*}{Qwen3 0.6B}
& FP         & 2e-5  & 1 & 4 & 4096 & - \\
& LoRA       & 2e-4  & 16 & 4 & 4096 & 32 \\
\midrule
\multirow{2}{*}{Llama-3.1-8B}
& FP         & 2e-5  & 1  & 4 & 4096 & - \\
& LoRA       & 2e-4  & 16 & 4 & 4096 & 32 \\
\midrule
\multirow{2}{*}{Mistral-7B}
& FP         & 2e-5  & 1  & 4 & 4096 & - \\
& LoRA       & 2e-4  & 16 & 4 & 4096 & 32 \\
\bottomrule
\end{tabular}
\end{table}

\begin{table}[H]
\centering
\caption{\textit{GradES} convergence thresholds for language models. Grace period ratio ($\alpha$) determines the fraction of training steps before gradient monitoring begins. Threshold $\tau$ defines the gradient magnitude below which components are frozen. Full-parameter fine-tuning requires higher thresholds due to larger gradient magnitudes.}
\label{tab:grades-hyperparameters}
\begin{tabular}{@{}llcc@{}}
\toprule
\textbf{Model} & \textbf{Method} & \textbf{Grace Period Ratio($\alpha$)} & \textbf{Threshold Tau ($\tau$)} \\
\midrule
\multirow{2}{*}{Qwen3 14B} 
& FP   & 0.55 & 6.387926  \\
& LoRA & 0.55& 0.025181  \\
\midrule
\multirow{2}{*}{Phi4 14B}
& FP   & 0.55 & 3.512882\\
& LoRA & 0.55 & 0.025181 \\
\midrule
\multirow{2}{*}{Qwen3 0.6B}
& FP   & 0.55 & 1.804456 \\
& LoRA & 0.55 & 0.001183 \\
\midrule
\multirow{2}{*}{Llama-3.1-8B}
& FP   & 0.55 & 2.404167  \\
& LoRA & 0.55 & 0.021637  \\
\midrule
\multirow{2}{*}{Mistral-7B}
& FP   & 0.55 & 2.726866  \\
& LoRA & 0.55 & 0.029591 \\
\bottomrule
\end{tabular}
\end{table}

\begin{table}[H]
\centering
\caption{Hyperparameter configuration for vision-language model experiments. Component-specific thresholds enable targeted convergence monitoring for vision and language transformers separately.}
\label{tab:vlm-hyperparameters}
\begin{tabular}{@{}llccc@{}}
\toprule
\textbf{Model} & \textbf{Method} & \textbf{Vision $\tau$} & \textbf{Language $\tau$} & \textbf{Grace Period ($\alpha$)} \\
\midrule
\multirow{2}{*}{Qwen2.5-VL-7B}
& LoRA & 3.3 & 33.0 & 0.30 \\
& FP (4-bit) & 0.13 & 0.09 & 0.30 \\
\midrule
\multirow{2}{*}{NanoVLM}
& Training & 0.30 & 6.00 & 0.28\\
& Training (alt) & 6.30 & 3.90 & 0.28 \\
\bottomrule
\end{tabular}
\end{table}

\begin{table}[H]
\centering
\caption{Computational efficiency of \textit{GradES} on vision-language models. FLOPs and training time measurements demonstrate consistent improvements across different fine-tuning methods and precision settings.}
\label{tab:vlm-efficiency}
\begin{tabular}{@{}llccc@{}}
\toprule
\textbf{Model} & \textbf{Method} & \textbf{FLOPs} & \textbf{Time (s)} & \textbf{Speedup} \\
\midrule
\multirow{2}{*}{Qwen2.5-VL-7B}
& LoRA (bf16) & 2.80E+19 & 103,466.02 & - \\
& LoRA+GradES (bf16) & 2.52E+19 & 87,572.36 & 1.18$\times$ \\
& FP (4-bit) & 2.27E+19 & 157,239.03 & - \\
& FP+GradES (4-bit) & 1.99E+19 & 134,686.05 & 1.17$\times$ \\
\midrule
\multirow{2}{*}{NanoVLM}
& Training & - & 62,009.00 & - \\
& Training+GradES & - & 34,669.00 &1.79$\times$ \\
\bottomrule
\end{tabular}
\end{table}

\section{Code Availability}
\label{section:code_availability}

We are committed to ensuring the reproducibility of our research. To facilitate this, we provide comprehensive resources:

\noindent\textbf{Implementation.} Our complete implementation, including training scripts, evaluation pipelines, and gradient monitoring utilities, is publicly available at \url{https://github.com/IXZZZ9/GradES}. The repository includes detailed documentation, environment setup instructions, and scripts to reproduce all experimental results presented in this paper.

\noindent\textbf{Licensing.} All code is released under the MIT License, promoting open scientific collaboration and industrial adoption. Model weights follow the original Qwen license terms.

\end{document}